\documentclass{article}
\usepackage[preprint]{neurips_2026}
\usepackage[utf8]{inputenc}
\usepackage[T1]{fontenc}
\usepackage{hyperref}
\usepackage{url}
\usepackage{microtype}
\hypersetup{hypertexnames=false}
\usepackage{amsmath}
\usepackage{amssymb}
\usepackage{amsthm}
\usepackage{booktabs}
\usepackage{multirow}
\usepackage{graphicx}
\graphicspath{{figures/}}
\usepackage{algorithm}
\usepackage{algorithmic}
\usepackage{enumitem}
\usepackage{xcolor}


\newtheorem{theorem}{Theorem}



\newcommand{\E}{\mathbb{E}}
\newcommand{\Var}{\mathrm{Var}}
\newcommand{\Cov}{\mathrm{Cov}}
\renewcommand{\Pr}{\mathrm{Pr}}
\newcommand{\calR}{\mathcal{R}}
\newcommand{\calI}{\mathcal{I}}
\newcommand{\calC}{\mathcal{C}}

\newcommand{\revised}[1]{#1}

\title{LLM Judge Validation Under Sparse Overlap: \\ From Inference to Design}

\author{%
  Junxuan Li \\
  Adobe\\
  \texttt{junxuanl@adobe.com} \\
  \And
  Arko Mukherjee \\
  Adobe\\
  \texttt{arkom@adobe.com} \\
  \And
  Soumyabrata Pal \\
  Adobe Research\\
  \texttt{soumyabratap@adobe.com} \\
}

\begin{document}
\maketitle

\begin{abstract}
Validating an LLM-as-a-judge requires estimating its agreement with humans, yet annotation budgets rarely allow every item to be multiply labeled. We prove that this \emph{overlap sparsity} is the first-order determinant of wrong deployment decisions: at 5\% pairwise overlap, wrong-decision rates reach 25\% and the probability of selecting the wrong best judge among ten candidates is 65\%. The two actionable levers are overlap \emph{quantity} and \emph{allocation}. For quantity, we derive a minimum-overlap formula showing $\rho \geq 0.25$ suffices for non-borderline judges while borderline cases remain fundamentally hard. For allocation, a zero-cost stratified scheme halves false-rejection rates relative to random sampling when strata are informative. We validate on 10 LLM judges across four evaluation matrices spanning visual assessment, causal reasoning, and summarization.
\end{abstract}

\section{Introduction}
\label{sec:intro}

Large Language Models (LLM) and Vision Language Models (VLM) are increasingly used as automated evaluators (e.g. ``LLM-as-a-judge'') for assessing text generation quality \citep{zheng2023judging}, safety compliance, and instruction following.
Before deploying such a judge, practitioners must validate: \emph{does it agree with human annotators well enough to replace them?}
The standard approach is to compute inter-annotator agreement metrics between the judge and human annotators on a calibration set, and apply a statistical test to determine whether the agreement is sufficient.

This approach faces a fundamental \textbf{annotation allocation problem}: teams can often afford broad single-pass coverage, but only a small subset of items can receive multiple labels. The practical question is \emph{where} to spend the next limited batch of annotations and \emph{how} to assess judge quality given the resulting annotation plan.
Figure~\ref{fig:depth_breadth} illustrates: under the same budget, concentrating labels on a narrow slice inflates the apparent human ceiling, while spreading them too thinly leaves too little shared overlap; only a stratified design avoids both failures (setup in Appendix~\ref{app:depth_breadth}).

\begin{figure*}[h]
\centering
\includegraphics[width=0.82\textwidth]{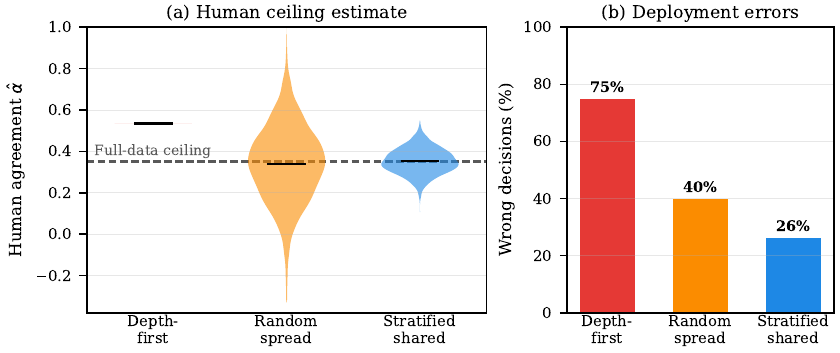}
\caption{\textbf{The same annotation budget can fail in two different ways.} On a real visual-assessment task, depth-first labeling biases the human-agreement ceiling~(a) and inflates wrong deployment decisions~(b); random spread gives each annotator independent samples, leaving too little shared overlap for reliable estimation; \revised{\textsc{Stratified} shared overlap draws a proportional stratum sample that all annotators label, simultaneously fixing fragmentation (maximal pairwise co-coverage) and representativeness (stratum-proportional distribution).}}
\label{fig:depth_breadth}
\end{figure*}

This motivates our systematic study of agreement estimation under sparse overlap.
Existing validation methods \citep{calderon2025alttest, jung2025trust, he2025hide} assume dense calibration data and provide little guidance on how reliability degrades when overlap is scarce.
We characterise how sparsity affects estimator bias, variance, and wrong-decision rates (\S\ref{sec:problem}), and show that overlap quantity and allocation are the dominant levers for reducing wrong decisions, whereas the choice of agreement coefficient plays a secondary role (\S\ref{sec:sample_complexity},~\S\ref{sec:method}). We derive concrete overlap guidelines (\S\ref{sec:sample_complexity},~\S\ref{sec:rq2}) and validate them on 10 LLM judges across three benchmark sources, yielding four evaluation matrices (\S\ref{sec:experiments}).

\textbf{Contributions.} \quad \textbf{C1.\ Analytical characterisation of sparsity-induced wrong decisions.} We prove that sparse agreement estimator variance decomposes as $\gamma_F(\pi)/m$ (Thm.~\ref{thm:sparsity}), where the coefficient-specific amplification $\gamma_F$ is bounded for $\hat p_o$ and AC1 but diverges under label skew for $\hat\kappa$ and $\hat\alpha$, and show empirically that \revised{among the design levers practitioners control at validation time ($F$, $\rho$, $\mathbf{D}$), overlap is the necessary condition for reliable decisions, while coefficient and design choices provide secondary improvements within each overlap level} (\S\ref{sec:problem},\,\S\ref{sec:rq1}).\quad \textbf{C2.\ Annotation design as a first-class methodological lever.} We establish that zero-cost static stratification is a strong default on real design-quality diagnostics when the strata are informative, while sequential coverage can help under extreme skew (\S\ref{sec:method},\,\S\ref{sec:rq2}).\quad \textbf{C3.\ Overlap planning.} An asymptotic minimum-overlap recipe for certification (Thm.~\ref{thm:planning}(ii)) and ranking (Thm.~\ref{thm:planning}(iii)), plus empirical reliability tables, let practitioners reason about ``how much overlap is enough?'' before or during data collection, validated on 10 LLM judges across three benchmark sources (\S\ref{sec:experiments}).

\textbf{Relation to prior work.}
\textit{LLM-as-a-judge validation.} A growing line of work validates LLM judges \citep{zheng2023judging, gu2026survey, li2024llmsjudges} via leave-one-out testing \citep{calderon2025alttest}, conformal prediction \citep{jung2025trust}, TOST equivalence \citep{he2025hide}, reporting-protocol audits \citep{lee2025correctly}, and indeterminacy-aware analyses \citep{guerdan2025indeterminacy}. These pipelines assume dense calibration data and do not analyse how their decision rules behave when each pair of annotators co-labels only a small slice of the corpus. \revised{Our problem structure is asymmetric and distinct from crowdsourcing: the LLM judge has already labelled all $n$ items at near-zero marginal cost, and the scarce resource is the human overlap needed to validate it. The question is not how to allocate labels to improve label quality (crowdsourcing) but how much human overlap suffices to make a reliable deployment decision.}

\textit{Inter-annotator agreement under sparsity.} The textbook coefficients ($\kappa$, $\alpha$, AC1; \citealp{cohen1960coefficient, fleiss1971measuring, krippendorff2004content, gwet2008computing, artstein2008inter}) and their finite-sample / sample-size diagnostics \citep{sim2005kappa, hughes2024improved, james2026counting, apple2024efficient, feinstein1990high, byrt1993bias} are derived for fully observed calibration sets. \revised{\citet{norregaard2022spa} show that agreement estimated from an already-collected sparse matrix is noisy; we build on this observation by asking the prospective question: given that sparse estimation is unreliable, how should the annotation budget be allocated \emph{before} data collection to make the resulting deployment decision reliable?} We characterise how each coefficient's bias and variance scale with the per-pair overlap count and translate that into overlap-rate guidelines for the sparse regime.

\textit{Imputation-based annotation modelling.} Latent-truth and incomplete-block models \citep{dawid1979maximum, hovy2013learning, paun2018comparing, passonneau2014benefits, snow2008cheap, fleiss2003statistical} \emph{impute} missing labels from generative assumptions about raters. We ask the orthogonal question of how much direct pairwise overlap is enough for unmodelled estimation, sidestepping the generative-model dependence.

\textit{Annotation design and disagreement-as-signal.} Allocating a fixed annotation budget across raters is treated operationally rather than as a methodological lever in current LLM-judge practice; our overlap guidelines also apply when disagreement is treated as signal rather than noise \citep{plank2022problem}\revised{, or when the target is defined by a social-choice aggregation rule rather than majority vote \citep{endriss2013collective, conitzer2024social}}. \revised{Our work addresses the measurement question---can we measure agreement reliably under sparsity?---and is complementary to the choice-of-target question (what should we measure?), which is orthogonal and treated in the cited social-choice literature.}

\revised{\textit{Related estimation methods.} \citet{fisch2024stratppi} (StratPPI) and \citet{angelopoulos2023ppi} (PPI) use model predictions as control variates to tighten confidence intervals for population performance metrics (e.g.\ mean quality score). Both are post-hoc estimators applied after data collection; neither addresses inter-annotator agreement estimation, annotation allocation, or judge accept/reject decisions. Appendix~\ref{app:prior_work_comparison} provides a structured comparison.}

\section{Agreement Validation Under Sparsity}
\label{sec:problem}

\subsection{Notation}
\label{sec:notation}

Let $\calI = \{1, \ldots, n\}$ denote the evaluation items and $\calR = \{r_1, \ldots, r_K, j\}$ the $K$ human annotators plus judge $j$.
Each rater assigns labels from $\calC = \{c_1, \ldots, c_L\}$ to the items they see.
Let $\mathbf{A} \in (\calC \cup \{\texttt{NA}\})^{n \times (K+1)}$ denote the annotation matrix and $\mathbf{D} \in \{0,1\}^{n \times (K+1)}$ the corresponding observation mask, with $D_{ik} = \mathbf{1}[A_{ik} \neq \texttt{NA}]$.
We treat $\mathbf{A}$ and $\mathbf{D}$ as random: the item-level annotation vectors $(A_{i,1}, \ldots, A_{i,K+1})$ are i.i.d.\ draws from an unknown joint label distribution $P$ over $\calC^{K+1}$, and the entries of $\mathbf{D}$ follow the design distribution specified by the annotation protocol.
Throughout, bare symbols denote \emph{population} quantities (deterministic functionals of $P$), and hats denote \emph{estimators}.
Write $\pi = (\pi_1, \ldots, \pi_L)$ for the population label prevalence and $\pi_{\max} = \max_\ell \pi_\ell$.
Throughout, \emph{uniform prevalence} means roughly balanced label frequencies, whereas \emph{skewed prevalence} means one or a few labels dominate the corpus (large $\pi_{\max}$).
The pairwise overlap rate for raters $(k, k')$ is $\rho_{kk'} = \tfrac{1}{n} \sum_i D_{ik} D_{ik'}$, and the global overlap rate is $\rho = \tfrac{1}{\binom{K+1}{2}} \sum_{k < k'} \rho_{kk'}$.

In the \emph{dense} setting ($\rho = 1$), every rater annotates every item.
In the \emph{sparse} setting that motivates this work, the LLM judge annotates all $n$ items ($D_{ij} = 1$ for all $i$), and for exposition we write the corpus-wide first-pass label layer as if one primary human annotator annotates all items ($D_{ir_1} = 1$ for all $i$), while each additional human annotator $r_k$ independently annotates each item with probability $\rho_k$, i.e.\ $D_{ir_k} \sim \mathrm{Bernoulli}(\rho_k)$ for $k = 2, \ldots, K$.
The first-pass layer can also be a pooled pass across multiple annotators or an auxiliary AI source; without one, this becomes a general incomplete-block allocation problem \citep{fleiss2003statistical} with the same sparsity bottleneck.

\subsection{Agreement Coefficients}
\label{sec:agreement_coefficients}

\textbf{Pairwise observed agreement.}
For any pair of raters $(k,k')$, the overlap set is $\calI_{kk'} = \{i : D_{ik} = D_{ik'} = 1\}$ and the pairwise observed agreement is
\begin{equation}
\label{eq:po}
\hat p_o^{(kk')} \;=\; \frac{1}{|\calI_{kk'}|}\sum_{i \in \calI_{kk'}} \mathbf{1}[A_{ik}=A_{ik'}].
\end{equation}
$\hat p_o$ is the default scoring function in our pipeline (\S\ref{sec:pipeline}). Under sparse overlap $|\calI_{kk'}|$ can be very small, making $\hat p_o$ noisy.

\textbf{Chance-corrected coefficients.}
We also adopt three chance-corrected agreement coefficients as alternative scoring functions (see full forms in Appendix~\ref{app:coef_forms}):
\begin{itemize}

\item \textbf{Krippendorff's $\hat\alpha$} \citep{krippendorff2004content} pools all within-item pairs into a ratio $\hat\alpha = 1 - \hat D_o/\hat D_e$ of observed-vs-expected disagreement.

\item \textbf{Cohen's pairwise $\hat\kappa$} \citep{cohen1960coefficient} is the analogous $(\hat p_o^{(kk')}-\hat p_e^{(kk')})/(1-\hat p_e^{(kk')})$ ratio with rater-marginal chance correction $\hat p_e^{(kk')} = \sum_\ell \hat\pi_\ell^{(k)}\hat\pi_\ell^{(k')}$; it suffers the \textbf{prevalence paradox} \citep{feinstein1990high,byrt1993bias} as $\pi_{\max} \to 1$, as $1-p_e$ vanishes.

\item \textbf{Gwet's $\widehat{\mathrm{AC1}}$} \citep{gwet2008computing} swaps $\hat p_e$ for a marginal-symmetric chance term $\frac{1}{L-1}\sum_\ell \hat\pi_\ell(1-\hat\pi_\ell)$, yielding bounded denominator constants at the cost of a different interpretation.
\end{itemize}

\textbf{Sparse-form estimators.}
All four coefficients are evaluated on the same sparse overlap subsets in \S\ref{sec:pipeline}. We write $m$ for the number of items in the overlap subset on which an estimator is evaluated, and $m_k$ when this count is specific to held-out rater $k$; estimators are subscripted accordingly: $\hat p_{o,m}$, $\hat\alpha_m$, $\hat\kappa_m$, $\widehat{\mathrm{AC1}}_m$. For $\hat p_{o,m}$ this is the itemwise average~\eqref{eq:po} restricted to the overlap subset; for the ratio statistics it is the subset-level analogue with sums and marginals over only the multi-annotated items (full forms in Appendix~\ref{app:coef_forms}).

\begin{theorem}[Sparsity--prevalence decomposition of agreement scores]
\label{thm:sparsity}
Assume i.i.d.\ items with $L \geq 2$ label categories, an overlap mask independent of labels, overlap count $m$, and fixed rater marginals (exchangeable in the simplified derivations below). Then for each coefficient $F \in \{\hat p_o, \hat\alpha, \hat\kappa, \widehat{\mathrm{AC1}}\}$, the sparse estimator satisfies
\begin{equation}\label{eq:var_decomp}
  \Var(\hat F_{m}) \;=\; \frac{\gamma_F(\pi)}{m} \;+\; O(1/m^2),
\end{equation}
where the coefficient-specific amplification factor $\gamma_F$ exhibits a prevalence-divergence hierarchy:
\begin{enumerate}[label=\textup{(\roman*)},leftmargin=*]
\item \textbf{$\hat p_o$}: $\gamma_{p_o} = p_o(1{-}p_o)$ \textup{(exact; no chance-correction denominator)}.
\item \textbf{$\hat\alpha$}: $\gamma_\alpha = O\bigl((1-p_e^\alpha)^{-2}\bigr)$, diverging as $\pi_{\max} \to 1$.
\item \textbf{$\hat\kappa$}: $\gamma_\kappa = O\bigl((1-p_e^\kappa)^{-4}\bigr)$, \textbf{diverging faster} than $\gamma_\alpha$ (exponent $-4$ vs.\ $-2$).
\item \textbf{$\widehat{\mathrm{AC1}}$}: $\gamma_{\mathrm{AC1}}$ bounded for all $L,\pi$ because $1-p_e^{\mathrm{AC1}} \geq (L{-}1)/L \geq 1/2$ \textup{(denominator cannot vanish)}.
\end{enumerate}
\textup{Bias: $\E[\hat F_{m}] - F = O(1/m)$ for all four coefficients, with $\hat p_o$ exactly unbiased (proof in Appendix~\ref{app:proof_sparsity}).}
\end{theorem}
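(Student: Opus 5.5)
The natural route is the delta method, applied to each coefficient written as a smooth function of empirical cell frequencies computed on the $m$ overlap items. Because the mask is independent of labels, conditioning on $m$ leaves the $m$ overlap items i.i.d.\ from $P$. The item-level indicator vectors (agreement indicators $\mathbf{1}[A_{ik}=A_{ik'}]$ and per-rater category indicators $\mathbf{1}[A_{ik}=c_\ell]$) are bounded, so their sample mean $\hat{\mathbf{v}}_m$ has mean $\mathbf{v}$ and covariance $\Sigma/m$. All higher central moments are $O(m^{-2})$.

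Each $\hat F_m$ equals $g_F(\hat{\mathbf{v}}_m)$ for a rational function $g_F$, smooth near $\mathbf{v}$ whenever $1-p_e^F>0$. A second-order Taylor expansion then gives
\[
\Var(\hat F_m)=\nabla g_F^\top\Sigma\nabla g_F/m+O(m^{-2}), \qquad \E\hat F_m-F=\tfrac12\mathrm{tr}(\nabla^2 g_F\,\Sigma)/m+O(m^{-2}).
\]
This yields \eqref{eq:var_decomp} with $\gamma_F=\nabla g_F^\top\Sigma\nabla g_F$ and the $O(1/m)$ bias. To make the remainders rigorous, I would split on the event $\{|\hat{\mathbf{v}}_m-\mathbf{v}|\le\delta\}$. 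Its complement has exponentially small probability by Hoeffding. The ratio statistics are bounded on it, or are defined by convention when the denominator vanishes. Case (i) needs none of this: $\hat p_{o,m}$ is a mean of i.i.d.\ Bernoulli$(p_o)$ variables, so it is exactly unbiased with variance exactly $p_o(1-p_o)/m$.

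For the hierarchy, I would bound the gradient norms, using $|p_o-p_e|\le 1-p_e$ (so $|F|\le 1$) to control the numerator terms.

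For $\hat\alpha=1-\hat D_o/\hat D_e$, the derivatives are $-1/D_e$ in $D_o$ and $D_o/D_e^2$ in $D_e$. Since $D_e\asymp 1-p_e^\alpha$ and $D_o\le D_e$ up to lower order, both are $O((1-p_e^\alpha)^{-1})$, so $\gamma_\alpha=O((1-p_e^\alpha)^{-2})$.

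For $\hat\kappa=(\hat p_o-\hat p_e)/(1-\hat p_e)$, the derivative with respect to $\hat p_e$ is $(p_o-1)/(1-p_e)^2$. That is $O((1-p_e)^{-2})$ in the worst case, which after squaring gives the stated $(1-p_e^\kappa)^{-4}$. The reason $\alpha$ avoids this extra factor is that Krippendorff's pooled marginal makes $D_o$ and $D_e$ co-vary. Only the first-order denominator term survives, whereas in $\kappa$ the separate rater marginals enter $\hat p_e$ with variance not shrinking with $1-p_e$. I would present the $-4$ rate as an upper bound obtained from this cruder bound, matching the $O(\cdot)$ in the statement.

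For AC1, $p_e^{\mathrm{AC1}}=\frac1{L-1}\sum_\ell\pi_\ell(1-\pi_\ell)\le\frac1{L-1}(1-1/L)=1/L$. Hence $1-p_e\ge(L-1)/L\ge1/2$, so the gradient is uniformly bounded and $\gamma_{\mathrm{AC1}}\le C$, with $C$ depending only on $L$.

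I expect the main obstacle to be the remainder control for the ratio statistics, not the leading term. Near $\pi_{\max}\to1$ the $O(1/m^2)$ constants themselves blow up, and $\hat p_e$ can equal $1$ with nonnegligible probability at small $m$. I would therefore state the expansion for fixed $\pi$ with $1-p_e>0$, and let the implicit constants depend on $1-p_e$. I would also handle the degenerate event $\{\hat p_e=1\}$ by a convention. Its probability, $O(\pi_{\max}^{m})$, is exponentially small for fixed $\pi$ and so is absorbed into $O(m^{-2})$.
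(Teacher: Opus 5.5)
Your plan follows the paper's proof. Part (i) uses the exact Bernoulli sample-mean computation. Parts (ii)--(iii) use the delta method, with gradient entries $-1/D_e$, $D_o/D_e^2$ for $\hat\alpha$ and $1/(1-p_e)$, $(p_o-1)/(1-p_e)^2$ for $\hat\kappa$. Part (iv) uses $\sum_\ell\pi_\ell^2\ge 1/L$. Where you differ, you are more careful:
\begin{itemize}
\item Writing every coefficient as $g_F$ of one vector of overlap-subset means, together with your truncation and Hoeffding step, makes the $O(1/m^2)$ variance remainder and the $O(1/m)$ bias genuine.
\item Your $\nabla g_F^\top\Sigma\nabla g_F$ automatically includes the $\Cov(\hat D_o,\hat D_e)$ term that the paper's part (ii) drops.
\item You compute $\hat D_e$ on the same overlap items, rather than at the corpus-level rate $1/n$ the paper uses.
\end{itemize}

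What needs fixing is your explanation of why (iii) is worse than (ii). You claim the rater marginals enter $\hat p_e$ ``with variance not shrinking with $1-p_e$.'' That is false. As $p_e\to 1$ both raters concentrate on a common category, so each $\Var(\hat\pi^{(k)}_\ell)=\pi^{(k)}_\ell(1-\pi^{(k)}_\ell)/m$ is $O\bigl((1-p_e)/m\bigr)$.

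Moreover, your own numerator control already removes the extra factor. From $|p_o-p_e|\le 1-p_e$ you get $1-p_o\le 2(1-p_e)$. Hence $|\partial_{p_e}g_\kappa|=(1-p_o)/(1-p_e)^2\le 2/(1-p_e)$, and $\gamma_\kappa=O\bigl((1-p_e^\kappa)^{-2}\bigr)$, the same exponent you obtain for $\hat\alpha$.

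So the exponent $-4$ is a valid but loose upper bound; the paper's ``conservative bound'' gives no more than that either. Neither your argument nor the paper's establishes the ``diverging'' or ``diverging faster'' clauses. Those would need lower bounds, and such bounds hold only in some regimes. For example, take two raters with symmetric binary marginals and fixed $\kappa\in(0,1)$: a direct delta-method computation gives $\gamma_\kappa$ of order $(1-p_e)^{-1}$, and at $\kappa=0$ it stays bounded. State (ii)--(iii) as $O(\cdot)$ upper bounds only, and drop the heuristic.
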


Under uniform prevalence the amplification constants $\gamma_F$ are comparable across coefficients; under skew the divergence of $\gamma_\kappa$ and $\gamma_\alpha$ becomes the dominant source of wrong decisions (\S\ref{sec:rq1}).

\subsection{Validation Pipeline}
\label{sec:pipeline}

Agreement-based validation asks whether judge $j$ agrees with humans \emph{at least as well as} humans agree with each other \citep{calderon2025alttest, jung2025trust, he2025hide}. We adopt the standard leave-one-out pipeline from these prior works: each human rater $r_k$ ($k = 1,\ldots,K$) is held out in turn, and the remaining $K-1$ humans $r_{-k} = \{r_{k'} : k' \neq k\}$ form the \emph{held-in pool}.

The main pipeline uses $F=\hat p_o$. Given a held-out rater $r_k$, the per-rater overlap subset is $\calI_k = \{i : D_{ir_k} = D_{ij} = 1,\; \sum_{k' \neq k} D_{ir_{k'}} \geq 1\}$, with per-rater overlap count $m_k = |\calI_k|$ ($\E[m_k] = \rho_k n$); under the sparse design of \S\ref{sec:notation} the bottleneck is $\min_{k \geq 2} m_k$ since $r_1$ is densely labelled. On this subset, the \emph{judge--pool} and \emph{human--pool} scores average pairwise observed agreement over within-item pairs against the held-in raters $r_{-k}$:
\begin{align}
\label{eq:score_judge}
S(j;\, r_k) \;&=\; \frac{\sum_{k' \neq k}\sum_i D_{ir_k} D_{ir_{k'}} D_{ij}\,\mathbf{1}[A_{ij}=A_{ir_{k'}}]}{\sum_{k' \neq k}\sum_i D_{ir_k} D_{ir_{k'}} D_{ij}}, \\
\label{eq:score_human}
S(r_{-k};\, r_k) \;&=\; \frac{\sum_{k' \neq k}\sum_i D_{ir_k} D_{ir_{k'}}\,\mathbf{1}[A_{ir_k}=A_{ir_{k'}}]}{\sum_{k' \neq k}\sum_i D_{ir_k} D_{ir_{k'}}}.
\end{align}
When RQ1 swaps in $\hat\alpha$, $\hat\kappa$, or $\widehat{\mathrm{AC1}}$, the same leave-one-out subsets are reused with the corresponding subset-level ratio statistic from Appendix~\ref{app:coef_forms}. The per-rater outcome records whether the judge matches or exceeds the human ceiling up to a tolerance $\varepsilon \geq 0$, and the judge is accepted when at least half of the $K$ comparisons favour it:
\begin{align}
\label{eq:per_rater}
    T_k &\;=\; \mathbf{1}\!\bigl[\,S(j;\,r_k) \;\geq\; S(r_{-k};\,r_k) - \varepsilon\,\bigr], \\
\label{eq:winning_rate}
    \omega &\;=\; \frac{1}{K} \sum_{k=1}^{K} T_k, \qquad \text{accept judge iff } \omega \geq 0.5.
\end{align}
The choice of $\varepsilon$ reflects annotators' subject-matter expertise \citep{calderon2025alttest}; we fix $\varepsilon=0.05$ throughout. By Thm.~\ref{thm:sparsity}, $S(j;r_k) - S(r_{-k};r_k)$ carries $O(1/m_k)$ sampling variance (with chance-corrected ratios inheriting the same rate by the delta method), so at low $\rho$ noise can flip the per-rater comparison in either direction, producing false approvals on weak judges and false rejections on strong ones regardless of the agreement coefficient.

\textbf{Pipeline-generality.} The LOO rule (Eqs.~\eqref{eq:per_rater}--\eqref{eq:winning_rate}) is the de facto LLM-judge validation protocol \citep{calderon2025alttest, jung2025trust, he2025hide}. Three of our four results are pipeline-general: the $\gamma_F/m$ decomposition (Thm.~\ref{thm:sparsity}), the \textsc{Strat} bias-reduction mechanism (\S\ref{sec:stratified}), and the planning recipe (Thm.~\ref{thm:planning}). What is LOO-specific is the exact wrong-decision threshold; alternative pipelines (TOST \citep{he2025hide}, conformal prediction \citep{jung2025trust}) face the same $\gamma_F/m$ noise and the planning recipe applies with $\delta$ set to each pipeline's tolerance.

\subsection{Overlap Planning}
\label{sec:sample_complexity}

A natural question is how much overlap is enough. Under a CLT approximation, the answer reduces to a Gaussian-tail recipe
\begin{equation}
\label{eq:overlap_recipe}
m^{*} \;=\; \Bigl\lceil z_{1-\alpha_{\text{sig}}/2}^{\,2}\,\sigma^2 \,\big/\, \delta^2 \Bigr\rceil,
\end{equation}
where $\sigma^2$ is the per-item variance and $\delta$ is a tolerance the practitioner is willing to miss with probability at most $\alpha_{\text{sig}}$. Theorem~\ref{thm:planning} instantiates this recipe for two core tasks---certifying a single judge and ranking multiple candidates---connecting the required overlap to the amplification hierarchy of Thm.~\ref{thm:sparsity}. \S\ref{sec:rq2} fixes $\delta = \varepsilon = 0.05$ as a real-benchmark diagnostic and \S\ref{sec:rq4} tabulates minimum $\rho$ across $\delta$ levels. \emph{Allocation} of this budget is the subject of \S\ref{sec:method}.

\begin{theorem}[Overlap planning under sparsity]
\label{thm:planning}
Under the conditions of Theorem~\ref{thm:sparsity}, the per-rater score difference $d_k = S(j;r_k) - S(r_{-k};r_k)$ has mean $\mu_d := \E[d_k]$ and variance $\sigma_0^2/m$, where $\sigma_0^2$ inherits the amplification hierarchy of Thm.~\ref{thm:sparsity}: bounded for $\hat p_o$ and $\widehat{\mathrm{AC1}}$, prevalence-dependent for $\hat\kappa$ and $\hat\alpha$. Assume signal margin $\mu_d + \varepsilon > 0$. Then under the Gaussian approximation:
\begin{enumerate}[label=\textup{(\roman*)},leftmargin=*]
\item \textup{(False-rejection rate).} $\Pr[\,d_k < -\varepsilon\,] \approx \Phi\!\bigl(-(\mu_d + \varepsilon)\sqrt{m}\,/\,\sigma_0\bigr)$.
\item \textup{(Certification).} For per-rater false-rejection rate $\leq \alpha_{\textup{sig}}/2$: $m \geq m^{*}_{\textup{cert}} = \lceil z_{1-\alpha_{\textup{sig}}/2}^{\,2}\,\sigma_0^2 / (\mu_d+\varepsilon)^2\rceil$. Since $\sigma_0^2$ inherits $\gamma_F(\pi)$, required overlap is prevalence-sensitive for $\hat\kappa$ and $\hat\alpha$.
\item \textup{(Ranking).} Selecting the best among $J$ judges replaces $(\mu_d{+}\varepsilon)^2$ with the minimum pairwise margin $\Delta_{\min}^2$, inflates the quantile to $z_{1-\alpha/(J-1)}$ (Bonferroni), and doubles the variance; these effects compound, so ranking demands more overlap than certification.
\end{enumerate}
\textup{(Proof in Appendix~\ref{app:proof_planning}.)}
\end{theorem}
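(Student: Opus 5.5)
The plan is to reduce everything to the variance statement of Theorem~\ref{thm:sparsity} and then apply a Gaussian tail computation. First I will establish the moment structure of $d_k = S(j;r_k) - S(r_{-k};r_k)$. Conditional on the mask $\mathbf{D}$, which is independent of labels, both scores in Eqs.~\eqref{eq:score_judge}--\eqref{eq:score_human} are ratios whose denominators are fixed given $\mathbf{D}$. For $F=\hat p_o$ each score is therefore an average over the $m=m_k$ overlap items of bounded itemwise quantities. Write $u_i$ for the judge--pool agreement fraction on item $i$ and $v_i$ for the human--pool fraction. Then $d_k = m^{-1}\sum_{i\in\calI_k}(u_i-v_i)$ plus a weighting correction when items carry different numbers of held-in raters; I will handle that correction by conditioning on the per-item rater counts. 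Because items are i.i.d., $d_k$ is an average of i.i.d.\ bounded terms. Hence $\E[d_k]=\mu_d$ exactly, and $\Var(d_k)=\sigma_0^2/m$ with $\sigma_0^2=\Var(u_i-v_i)\le 1$, which is bounded. For the ratio coefficients I will invoke the delta-method expansion already used in Theorem~\ref{thm:sparsity}. Each of $\hat\alpha_m,\hat\kappa_m,\widehat{\mathrm{AC1}}_m$ is a smooth function of subset-level empirical moments. So $d_k$ is the difference of two such functions evaluated on the same items. Its linearisation is an i.i.d.\ average whose variance is $\sigma_0^2/m+O(1/m^2)$, and its mean is $\mu_d+O(1/m)$. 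The gradient of each coefficient carries the factor $(1-p_e)^{-1}$, so $\sigma_0^2$ is bounded by a constant times the corresponding $\gamma_F(\pi)$ (cross terms are handled by Cauchy--Schwarz). This transfers the hierarchy: $\sigma_0^2$ is bounded for $\hat p_o$ and AC1 and prevalence-dependent for $\hat\kappa,\hat\alpha$.

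For (i), the CLT (the Gaussian approximation in the statement) gives $\sqrt m\,(d_k-\mu_d)/\sigma_0 \Rightarrow N(0,1)$. Therefore
\[
\Pr[d_k<-\varepsilon] = \Pr\bigl[\sqrt m(d_k-\mu_d)/\sigma_0 < -(\mu_d+\varepsilon)\sqrt m/\sigma_0\bigr] \approx \Phi\bigl(-(\mu_d+\varepsilon)\sqrt m/\sigma_0\bigr).
\]
The $O(1/m)$ bias is absorbed, since it shifts the argument by only $O(m^{-1/2})$. For (ii), I set this quantity $\le\alpha_{\mathrm{sig}}/2$. Using monotonicity of $\Phi$, this is equivalent to $(\mu_d+\varepsilon)\sqrt m/\sigma_0\ge z_{1-\alpha_{\mathrm{sig}}/2}$. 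Squaring, which is legitimate because the margin is positive, and taking the ceiling yields $m^*_{\mathrm{cert}}$. This matches Eq.~\eqref{eq:overlap_recipe} with $\sigma^2=\sigma_0^2$ and $\delta=\mu_d+\varepsilon$. The prevalence sensitivity follows directly from the transferred hierarchy.

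For (iii), consider judges $j_1,\dots,j_J$ scored on the same overlap set. Let $j^\star$ be the true best, and let $\Delta_{ab}=\E[S(j_a;r_k)-S(j_b;r_k)]$. The wrong-selection event is contained in $\bigcup_{b\ne\star}\{\hat S_{j_b}\ge \hat S_{j^\star}\}$. Each pairwise difference is again an i.i.d.\ average. Its variance is $(\sigma_a^2+\sigma_b^2-2c_{ab})/m$. I bound this by $2\sigma_0^2/m$ under the exchangeable-marginal assumption with nonnegative covariance dropped, or by taking $\sigma_0^2$ as the common per-judge variance. This gives the ``doubled variance''. A union bound over the $J-1$ competitors, each held to level $\alpha/(J-1)$, replaces the quantile by $z_{1-\alpha/(J-1)}$. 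The minimum margin $\Delta_{\min}$ is the worst case across those competitors. This gives
\[
m^*_{\mathrm{rank}} = \bigl\lceil 2 z_{1-\alpha/(J-1)}^2\sigma_0^2/\Delta_{\min}^2\bigr\rceil.
\]
I then compare this with $m^*_{\mathrm{cert}}$. Each of the three factors (quantile, variance, margin) is at least as large in the ranking bound, given $\Delta_{\min}\le \mu_d+\varepsilon$ in the regimes of interest. The product is therefore larger, which gives the claimed compounding.

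The main obstacle is the first step for the ratio coefficients. It requires making rigorous that the variance of the \emph{difference} of two delta-method linearisations inherits the $(1-p_e)^{-2}$ and $(1-p_e)^{-4}$ scaling. The two scores share items and the same held-out rater, so they are correlated, and cancellation could in principle occur. I would first try to state only an upper bound, $\sigma_0^2\lesssim\gamma_F$, which suffices for ``inherits''. I would also verify that the $O(1/m^2)$ remainders remain negligible uniformly while $p_e$ is bounded away from $1$.
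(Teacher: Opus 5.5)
Your proposal is correct and follows essentially the same route as the paper's proof. You write $d_k$ as an itemwise average (exact for $\hat p_o$, a delta-method linearisation for the ratio coefficients), standardise under the CLT for (i), invert the monotone Gaussian tail for (ii), and use a union bound over the $J-1$ competitors with a Bonferroni quantile and doubled variance for (iii). Your treatment of the per-item weighting, the Cauchy--Schwarz transfer of $\gamma_F$, and the covariance term in (iii) is, if anything, more careful than the paper's, which implicitly treats the two judges' estimates as independent.

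One small caveat concerns your final comparison. With $\alpha=\alpha_{\mathrm{sig}}$, the quantile is inflated only when $J\ge 3$; for $J=2$ you get $z_{1-\alpha}<z_{1-\alpha/2}$. So the claim that ``each factor is at least as large'' needs the restriction $J\ge 3$, alongside your assumed $\Delta_{\min}\le\mu_d+\varepsilon$. The paper's own comparison is loose in the same way.
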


\section{Stratified Annotation Design}
\label{sec:method}
\label{sec:stratified}

Given the pipeline of \S\ref{sec:pipeline}, the practitioner's remaining choices are the agreement coefficient $F$, the overlap budget $\rho$, the rater count $K$, and the annotation design $\mathbf{D}$. Each acts on a different term of the $\gamma_F(\pi)/m$ decomposition~\eqref{eq:var_decomp}: $F$ determines the amplification $\gamma_F$, $\rho$ controls $m$ directly, and $\mathbf{D}$ controls the label composition of the overlap subset. This section develops the design lever; \S\ref{sec:rq1} confirms their relative importance empirically.

\textbf{Marginal-anchored stratification.}
We instantiate the sparse design of \S\ref{sec:notation} with a per-rater overlap subset structure: the judge and one primary human annotate all $n$ items, and each additional human $r_k$ labels an $m$-item subset $\calI^+_k \subseteq \calI$. Under purely random selection, the empirical label distribution on $\calI^+_k$ can diverge from the corpus marginals $\pi$ when prevalence is skewed. This marginal mismatch feeds into the expected-disagreement term $\hat D_e$, biasing the human ceiling estimate that the validation pipeline compares the judge against. \textbf{Stratified subsampling} draws $\calI^+_k$ proportionally across pre-specified strata, reducing this mismatch in expectation when the stratification signal tracks the relevant label marginals. The design can reduce marginal-mismatch bias without sacrificing per-rater overlap count $m_k$, but it is not guaranteed to help when the stratum signal is uninformative or collapses under extreme skew.

\textbf{A Family of Stratification Designs.}
A stratified design is fully specified by (i) a per-rater \emph{stratification signal} $s_k : \calI \to \calC$ that assigns every item a stratum label, and (ii) a within-stratum sampler that draws $m_\ell \propto |\{i : s_k(i) = c_\ell\}|$ items per stratum (proportional allocation, residual slots distributed to the largest strata). We consider three concrete members of this family along a complexity spectrum:
\begin{itemize}[leftmargin=*, itemsep=2pt, topsep=2pt]
    \item \textsc{Strat}: $s_k(i) = A_{i, r_1}$, the primary annotator's label, shared across all secondaries. Fully static, the simplest member.
    \item \textsc{Seq-Refined}: $s_k(i) = \mathrm{mode}\{A_{i, r_j} : j < k,\, D_{i, r_j} = 1\}$, the per-item consensus among raters whose labels are already in hand when rater $k$'s task is configured.
    \item \textsc{Seq-Coverage}: \textsc{Seq-Refined} with within-stratum ties broken in favour of items least covered so far, equalising pairwise overlap across raters.
\end{itemize}
The two sequential members require only \emph{between-rater} adaptation: rater $k$'s task is fixed before their session begins using labels already collected, so they are operationally feasible whenever annotation jobs are queued sequentially. We exclude \emph{within-rater} per-item adaptive policies on operational grounds, since annotation sessions are often batched, time-bounded, and run asynchronously across annotators. \S\ref{sec:rq2} benchmarks the stratification design family.

\textbf{Fragmentation and representativeness.} These are distinct failure modes. \emph{Fragmentation}: with $K$ raters each covering $m$ items independently, expected pairwise co-coverage is only $m^2/n$, collapsing well before per-rater coverage is useful. \emph{Representativeness}: even a shared panel can under-sample rare strata under skew, biasing the human ceiling. \textsc{Strat} addresses both: stratified draws control the label distribution while the shared panel structure maximises co-coverage.

\section{Experiments}
\label{sec:experiments}

\subsection{Setup}
\label{sec:setup}

\textbf{Synthetic data.} We generate annotation matrices with $n=500$ items, $K \in \{4,5\}$ humans plus one LLM judge, and $L \in \{2,5\}$ labels (Appendix~\ref{app:synthetic}), varying judge accuracy $p_j$, human ceiling accuracy $p_h$ (fixed at $0.85$), prevalence skew $\pi_{\max}$, and overlap rate $\rho$ over task-specific grids and averaging over 300 realisations.

\textbf{Real benchmarks and judges.} We use three benchmark sources (\textbf{WAX} \citep{codella2018skin}, \textbf{CeBaB} \citep{abraham2022cebab} with \texttt{stars}/\texttt{aspects} subtasks, \textbf{SummEval} \citep{fabbri2021summeval}), yielding four real evaluation matrices, with \textbf{10 LLM judges}: five from \citet{calderon2025alttest} (Gemini Flash/Pro, GPT-4o-mini, Llama-3.1, Mistral-v0.3) plus five collected for this work (GPT-5.4, GPT-5.2, GPT-4o, Claude~Opus~4.5, Claude~Sonnet~4.5; 28{,}822 new judge labels). For \S\ref{sec:rq2} we apply a \textbf{class-merging skew-masking protocol} that binarises each multi-class matrix at threshold $t$ to expose a tunable $\pi_{\max} \in [0.55, 0.96]$, since no benchmark in our suite naturally reaches the $\pi_{\max} \geq 0.90$ regime the sequential family members target.

\textbf{Inference convention and reference.}
All results in \S\S\ref{sec:rq1}--\ref{sec:rq4} use the leave-one-out pipeline (Eqs.~\eqref{eq:per_rater}--\eqref{eq:winning_rate}, $\varepsilon = 0.05$) with $F = \hat p_o$ unless noted; the coefficient swap to $F \in \{\hat\alpha,\hat\kappa,\widehat{\mathrm{AC1}}\}$ is RQ1's comparison. Decisions on the dense matrix ($\rho=1$) serve as the finite-sample reference scored by wrong-decision rates.

\subsection{RQ1: What Drives Wrong Decisions -- Coefficient or Design?}
\label{sec:rq1}

\begin{figure*}[!thbp]
    \centering
    \includegraphics[width=0.92\textwidth]{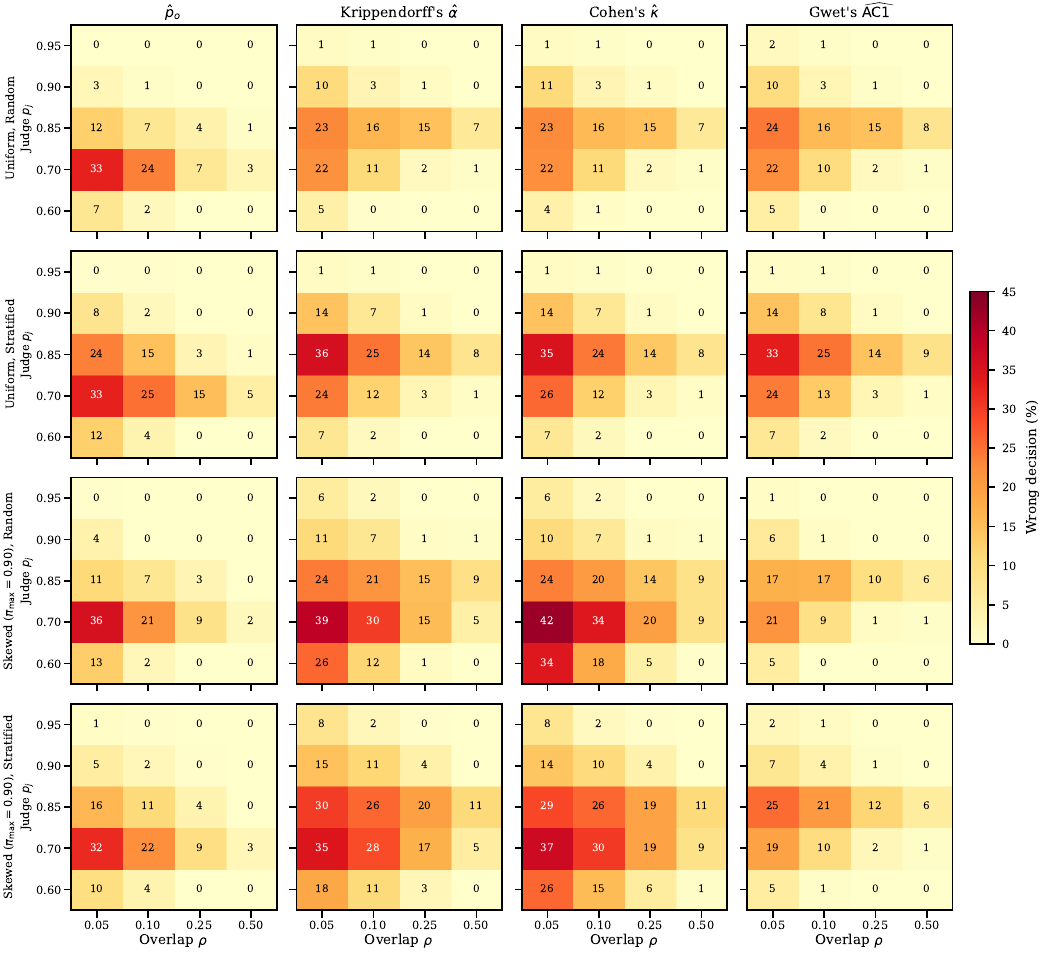}
    \caption{\textbf{Wrong-decision rate across overlap, judge quality, and design.} Each cell shows the percentage of trials whose deploy/reject outcome differs from the ground-truth reference. Row pairs vary prevalence and annotation design (Random vs.\ Stratified); columns vary the agreement coefficient $F$.}
    \label{fig:wrong_heatmap}
    \end{figure*}

\S\ref{sec:method} showed that $F$, $\rho$, and $\mathbf{D}$ each act on a different term of the $\gamma_F/m$ decomposition~\eqref{eq:var_decomp}; RQ1 isolates their effects on wrong-decision rates. Borderline judges ($p_j \in \{0.85, 0.70\}$) are the diagnostic regime: small signal margins make the $O(1/m_k)$ noise most likely to flip the per-rater comparison.

\textbf{$\hat p_o$ is the most stable default in the diagnostic cells.} In the borderline strong-judge cells of Figure~\ref{fig:wrong_heatmap}, $\hat p_o$ has the lowest or near-lowest wrong-decision rate, with the gap largest at low overlap: at $\rho = 0.05$, $p_j = 0.90$, $\hat p_o$ errs on 3--4\% of trials while $\hat\alpha$ and $\hat\kappa$ reach 10--11\% from the diverging amplification $\gamma_\alpha$, $\gamma_\kappa$ (Thm.~\ref{thm:sparsity}(ii--iii)). Large-gap judges ($p_j = 0.95$) are robust to coefficient choice as the margin absorbs the noise.

\textbf{Stratification helps under skew but not uniformly.} Comparing Random and Stratified row-pairs of Figure~\ref{fig:wrong_heatmap}, stratification reduces wrong decisions for weak judges under skew (e.g.\ $p_j = 0.70$, $\hat\kappa$, $\rho = 0.05$: $42\% \to 37\%$) but offers no consistent benefit under uniform prevalence, where marginal-mismatch bias is absent. The full design effect emerges on real benchmarks with natural skew (\S\ref{sec:rq3}).

\subsection{RQ2: Which Design Performs Best on Real Benchmarks?}
\label{sec:rq2}

RQ2 compares the four design members of \S\ref{sec:stratified} (\textsc{Random}, \textsc{Strat}, \textsc{Seq-Refined}, \textsc{Seq-Coverage}) on real benchmarks using test-independent metrics that evaluate how faithfully the human-pool score is preserved under sparse overlap. For this diagnostic we use Krippendorff's $\hat\alpha_{\mathrm{hh}}$, which is sensitive to the marginal-mismatch mechanism stratification controls. Write $\hat F_m$ for the sparse human-pool estimate and $F^{*}$ for its dense-matrix reference ($\rho=1$):
\begin{itemize}[leftmargin=*, itemsep=2pt, topsep=2pt]
    \item \textbf{Bias:} $\E[\hat F_m] - F^{*}$, the expected drift of the sparse estimate. This directly measures how the sampled overlap subset changes the human-ceiling estimate relative to the dense matrix.
    \item \textbf{Std:} $\mathrm{std}(\hat F_m)$, sampling spread across trials.
    \item \textbf{$(\delta{=}0.05, \alpha_{\text{sig}}{=}0.05)$-reliability:} fraction of trials with $|\hat F_m - F^{*}| \leq 0.05$.
\end{itemize}

We report on six (dataset, skew-level) configurations spanning $\pi_{\max} \in [0.26, 0.96]$ (Table~\ref{tab:rq2_real}): two at natural prevalence (\texttt{cebab\_stars}, \texttt{summeval}) and four skew-masked at threshold $t$ via the class-merging protocol of \S\ref{sec:setup}.

\begin{table}[!htbp]
\centering
\caption{\textbf{Test-independent design-quality metrics on real benchmarks.} R=\textsc{Random}, S=\textsc{Strat}, SR=\textsc{Seq-Refined}, SC=\textsc{Seq-Coverage}. Lower is better for bias/std; higher for reliability; bold marks the best design per row.}
\label{tab:rq2_real}
\scriptsize
\setlength{\tabcolsep}{3pt}
\begin{tabular}{llrrrrrrrrrrrrr}
\toprule
& & \multicolumn{4}{c}{$|\mathrm{bias}|$ of $\hat F_m$} & \multicolumn{4}{c}{$\mathrm{std}(\hat F_m)$} & \multicolumn{4}{c}{Rel.\ ($\delta = 0.05$)} \\
\cmidrule(lr){3-6} \cmidrule(lr){7-10} \cmidrule(lr){11-14}
Dataset ($\pi_{\max}$) & $\rho$ & R & S & SR & SC & R & S & SR & SC & R & S & SR & SC \\
\midrule
\multirow{3}{*}{cebab\_stars (.26)} & .05 & .087 & \textbf{.001} & .087 & .105 & .126 & \textbf{.074} & .124 & .117 & .22 & \textbf{.49} & .24 & .22 \\
 & .10 & .069 & \textbf{.004} & .070 & .123 & .080 & \textbf{.051} & .079 & .079 & .33 & \textbf{.64} & .32 & .13 \\
 & .25 & .039 & \textbf{.001} & .044 & .049 & .040 & \textbf{.028} & .040 & .042 & .58 & \textbf{.93} & .55 & .51 \\
\midrule
\multirow{3}{*}{summeval (.56)} & .05 & .020 & .034 & \textbf{.016} & .016 & .030 & \textbf{.026} & .026 & .027 & .82 & .70 & .88 & \textbf{.90} \\
 & .10 & .014 & .031 & .013 & \textbf{.012} & .019 & .018 & .018 & \textbf{.017} & .95 & .84 & .97 & \textbf{.98} \\
 & .25 & .008 & .021 & .008 & \textbf{.001} & .011 & .010 & .010 & \textbf{.010} & \textbf{1.00} & .99 & 1.00 & 1.00 \\
\midrule
\multirow{3}{*}{lesion@$t{=}0$ (.65)} & .05 & .044 & \textbf{.018} & .045 & .043 & .081 & .101 & .080 & \textbf{.071} & .36 & .37 & .41 & \textbf{.45} \\
 & .10 & .034 & \textbf{.002} & .039 & .049 & .052 & .071 & .053 & \textbf{.047} & \textbf{.57} & .53 & .53 & .51 \\
 & .25 & .020 & \textbf{.002} & .024 & .038 & .032 & .040 & .033 & \textbf{.025} & \textbf{.82} & .77 & .77 & .66 \\
\midrule
\multirow{3}{*}{lesion@$t{=}1$ (.74)} & .05 & .130 & \textbf{.021} & .131 & .152 & .088 & .105 & .088 & \textbf{.084} & .16 & \textbf{.35} & .19 & .10 \\
 & .10 & .111 & \textbf{.003} & .113 & .148 & .059 & .068 & .055 & \textbf{.055} & .13 & \textbf{.53} & .16 & .03 \\
 & .25 & .072 & \textbf{.006} & .074 & .122 & .035 & .038 & .032 & \textbf{.024} & .26 & \textbf{.78} & .24 & .00 \\
\midrule
\multirow{3}{*}{lesion@$t{=}2$ (.93)} & .05 & .116 & .129 & .111 & \textbf{.105} & .176 & .264 & .173 & \textbf{.149} & .17 & .16 & .19 & \textbf{.20} \\
 & .10 & .076 & .088 & .070 & \textbf{.060} & .123 & .165 & .113 & \textbf{.089} & .29 & .18 & .34 & \textbf{.40} \\
 & .25 & .035 & .055 & .030 & \textbf{.021} & .066 & .080 & .058 & \textbf{.045} & .51 & .41 & .60 & \textbf{.70} \\
\midrule
\multirow{3}{*}{cebab\_aspects@$t{=}0$ (.96)} & .05 & .135 & .154 & \textbf{.124} & .269 & .374 & \textbf{.179} & .359 & .408 & .17 & \textbf{.18} & \textbf{.18} & .04 \\
 & .10 & \textbf{.081} & .082 & .094 & .278 & .231 & \textbf{.129} & .210 & .242 & .19 & \textbf{.31} & .19 & .09 \\
 & .25 & .038 & \textbf{.029} & .037 & .044 & .100 & \textbf{.065} & .096 & .102 & .36 & \textbf{.53} & .42 & .36 \\
\bottomrule
\end{tabular}
\end{table}

\textbf{Strat cuts estimator bias by an order of magnitude when the strata are informative.}
On \texttt{cebab\_stars} ($\pi_{\max}=0.26$), \textsc{Strat} reduces $|\mathrm{bias}(\hat F_m)|$ by roughly $17$--$87\times$ relative to \textsc{Random} across all $\rho$, lifting reliability from 22--58\% to 49--93\%. On \texttt{lesion}@$t{=}1$ ($\pi_{\max}=0.74$) the pattern repeats: $6$--$37\times$ bias reduction, 19--52pp reliability gains. On \texttt{summeval} ($\pi_{\max}=0.56$), all designs perform similarly because the marginal-mismatch bias is already small.

\textbf{Adaptive strategies are not a uniformly better default.}
\textsc{Seq-Refined} matches \textsc{Strat} only when its between-rater consensus signal is itself accurate; \textsc{Seq-Coverage}'s coverage-equalising tiebreak helps in some cells but amplifies variance when coverage diversity conflicts with representative sampling. We therefore recommend the simpler \textsc{Strat} design when the primary stratum signal has usable variation, and sequential coverage only when that signal collapses.

\textbf{Limitation: Strat requires informative strata.}
At $\pi_{\max}\geq 0.93$, the primary annotator labels nearly all items identically and stratification collapses; \textsc{Seq-Coverage} or higher $\rho$ become necessary (Appendix~\ref{app:strat_guide}).

\subsection{RQ3: How Do Real LLM Judges Behave Under Sparsity?}
\label{sec:rq3}

We apply the pipeline to 10 LLM judges and four real-world evaluation matrices (WAX, CeBaB-aspects, CeBaB-stars, SummEval; \S\ref{sec:setup}), asking whether synthetic patterns persist on real data.

\textbf{Sparsity scrambles model rankings.}
Beyond deploy/reject, practitioners often need to \emph{rank} candidate judges. Table~\ref{tab:rq3_ranking} reports ranking stability across 10 judges per benchmark: at $\rho = 0.05$ the mean top-1 error is 65\% (CeBaB-aspects/stars near 88\%; SummEval at 18\% thanks to its 6\,400-item corpus), and at $\rho = 0.25$ remains 36\%. Ranking demands stricter overlap than deploy/reject because pairwise margins between candidate judges are tighter than the human--judge gap (Thm.~\ref{thm:planning}(iii)).

\begin{table}[!htbp]
\centering
\caption{\textbf{Ranking stability under sparsity.} ``Rank err.''\ = fraction of pairwise rankings scrambled; ``Top-1''\ = probability of selecting the wrong best judge. 10 judges per benchmark.}
\label{tab:rq3_ranking}
\scriptsize
\setlength{\tabcolsep}{3pt}
\begin{tabular}{lrrrrrrrrrr}
\toprule
& \multicolumn{2}{c}{WAX} & \multicolumn{2}{c}{CeBaB-asp.} & \multicolumn{2}{c}{CeBaB-stars} & \multicolumn{2}{c}{SummEval} & \multicolumn{2}{c}{\textbf{Mean}} \\
\cmidrule(lr){2-3} \cmidrule(lr){4-5} \cmidrule(lr){6-7} \cmidrule(lr){8-9} \cmidrule(lr){10-11}
$\rho$ & Rank & Top-1 & Rank & Top-1 & Rank & Top-1 & Rank & Top-1 & Rank & Top-1 \\
\midrule
0.05 & .297 & .670 & .349 & .887 & .342 & .875 & .122 & .177 & .278 & .652 \\
0.10 & .234 & .600 & .279 & .777 & .255 & .787 & .083 & .080 & .213 & .561 \\
0.25 & .146 & .427 & .196 & .507 & .156 & .517 & .049 & .003 & .137 & .363 \\
0.50 & .071 & .270 & .138 & .213 & .085 & .350 & .031 & .000 & .082 & .208 \\
0.75 & .031 & .103 & .091 & .033 & .053 & .110 & .018 & .000 & .048 & .062 \\
\bottomrule
\end{tabular}
\end{table}

\textbf{Stratified sampling reduces false rejection; false approval depends on judge quality.}
Table~\ref{tab:rq3_real} aggregates wrong-decision rates across all four evaluation matrices (40~judges; per-judge detail in Appendix~\ref{app:real_strat}). At $\rho = 0.05$, \textsc{Strat} halves the mean FR rate from 29\% to 14\%; by $\rho = 0.25$, \textsc{Strat} falls below 3\% FR, while the other designs still sit around 11--16\%. FA rates are lower (7--17\% at $\rho = 0.05$) because most reject judges sit far below the boundary, but FA does \emph{not} decay to zero even at $\rho = 0.50$ (7--11\%), motivating the borderline-judge analysis below.

\begin{table}[!htbp]
\centering
\caption{\textbf{Wrong-decision rates by design and overlap} (aggregated across the four real evaluation matrices). FR = mean false-rejection rate on 20 pass-judges ($\omega \geq 0.60$); FA = mean false-approval rate on 16 reject judges ($\omega < 0.50$); WDR = mean over all 40 judges. Bold = best design per $\rho$.}
\label{tab:rq3_real}
\scriptsize
\setlength{\tabcolsep}{3pt}
\begin{tabular}{llrrr}
\toprule
$\rho$ & Design & Mean FR & Mean FA & Mean WDR \\
\midrule
\multirow{4}{*}{0.05}
  & \textsc{Random} & .290 & .112 & .250 \\
  & \textsc{Strat} & \textbf{.142} & .172 & \textbf{.184} \\
  & \textsc{Seq-Refined} & .308 & .098 & .250 \\
  & \textsc{Seq-Coverage} & .435 & \textbf{.066} & .311 \\
\midrule
\multirow{4}{*}{0.10}
  & \textsc{Random} & .238 & .082 & .212 \\
  & \textsc{Strat} & \textbf{.087} & .136 & \textbf{.137} \\
  & \textsc{Seq-Refined} & .230 & .086 & .209 \\
  & \textsc{Seq-Coverage} & .328 & \textbf{.072} & .264 \\
\midrule
\multirow{4}{*}{0.25}
  & \textsc{Random} & .122 & .071 & .149 \\
  & \textsc{Strat} & \textbf{.027} & .114 & \textbf{.092} \\
  & \textsc{Seq-Refined} & .113 & .070 & .146 \\
  & \textsc{Seq-Coverage} & .158 & \textbf{.075} & .174 \\
\midrule
\multirow{4}{*}{0.50}
  & \textsc{Random} & .021 & .079 & .089 \\
  & \textsc{Strat} & \textbf{.007} & .110 & \textbf{.077} \\
  & \textsc{Seq-Refined} & .027 & .077 & .092 \\
  & \textsc{Seq-Coverage} & .030 & \textbf{.075} & .096 \\
\bottomrule
\end{tabular}
\end{table}

\textsc{Strat} lowers FR but raises FA at $\rho=0.05$ (14.2\% vs 29.0\% FR; 17.2\% vs 11.2\% FA), because correcting the human-ceiling upward bias lets some borderline-reject judges through. When FA cost dominates: increase $\rho$ or use \textsc{Seq-Coverage} (Appendix~\ref{app:fa_guide}).

\textbf{Borderline judges reveal a fundamental limit that higher overlap mitigates.}
Table~\ref{tab:rq3_borderline} isolates four judges sitting exactly on the pass/reject boundary ($\omega = 0.50$). Under \textsc{Strat}, strong pass-judges converge below 3\% WDR by $\rho = 0.25$, but borderline judges remain at 30--44\% WDR even under the best design: when $\omega$ equals the threshold, any finite-sample estimator has ${\sim}50\%$ error. Practitioners should budget $\rho \geq 0.25$ for non-borderline judges and treat verdicts with $\hat\omega$ near $0.5$ as inconclusive.

\begin{table}[!htbp]
\centering
\caption{\textbf{Borderline vs.\ strong judges across designs:} mean wrong-decision rate at increasing $\rho$ (aggregated over the four real evaluation matrices). Strong pass ($\omega \geq 0.60$, 20 judges), borderline pass ($\omega = 0.50$, 4 judges), reject ($\omega < 0.50$, 16 judges). Bold = best design per category and $\rho$.}
\label{tab:rq3_borderline}
\scriptsize
\setlength{\tabcolsep}{3pt}
\begin{tabular}{llrrrr}
\toprule
Design & Category & $\rho = 0.05$ & $0.10$ & $0.25$ & $0.50$ \\
\midrule
\multirow{3}{*}{\textsc{Random}}
  & Strong pass (20) & .290 & .238 & .122 & .021 \\
  & Borderline pass (4) & .597 & .597 & .599 & .464 \\
  & Reject (16) & .112 & .082 & .071 & .079 \\
\midrule
\multirow{3}{*}{\textsc{Strat}}
  & Strong pass (20) & \textbf{.142} & \textbf{.087} & \textbf{.027} & \textbf{.007} \\
  & Borderline pass (4) & \textbf{.440} & \textbf{.389} & \textbf{.323} & \textbf{.295} \\
  & Reject (16) & .172 & .136 & .114 & .110 \\
\midrule
\multirow{3}{*}{\textsc{Seq-Refined}}
  & Strong pass (20) & .308 & .230 & .113 & .027 \\
  & Borderline pass (4) & .572 & .602 & .618 & .482 \\
  & Reject (16) & .098 & .086 & \textbf{.070} & .077 \\
\midrule
\multirow{3}{*}{\textsc{Seq-Coverage}}
  & Strong pass (20) & .435 & .328 & .158 & .030 \\
  & Borderline pass (4) & .676 & .709 & .647 & .509 \\
  & Reject (16) & \textbf{.066} & \textbf{.072} & .075 & \textbf{.075} \\
\bottomrule
\end{tabular}
\end{table}

\textbf{Real-world tests align with synthetic findings.}
The elevated wrong-decision rates do not contradict RQ1: real judges exhibit heterogeneous per-rater agreement (e.g.\ Gemini Flash on CeBaB-stars: per-rater $\hat p_o$ ranges 0.40--0.64 against a human baseline of ${\sim}0.51$), so their deltas cluster near the $\varepsilon$-threshold---a regime the i.i.d.\ synthetic grid does not probe. The pipeline still separates strong from weak judges, but borderline cases require higher overlap or should be flagged as inconclusive (correlated-item robustness in Appendix~\ref{app:correlated}).

\subsection{RQ4: How Much Overlap Is Enough?}
\label{sec:rq4}

Having established that design choice and overlap jointly control wrong-decision rates (\S\ref{sec:rq2}--\S\ref{sec:rq3}), we next translate these findings into concrete planning rules.

\textbf{Required overlap for a precision target.}
Table~\ref{tab:overlap_required} reports the empirical minimum $\rho$ at which the human-pool score $\hat F_m$ (computed with $\hat p_o$ under per-rater sparse overlap) stays within $\pm\delta$ of its dense reference on $\geq$95\% of 300 synthetic trials. For the common $\delta=0.05$ target, practitioners need 25\% overlap under uniform prevalence ($L=2$), widening to 15--50\% under skew ($L=5$); at $\rho=0.05$ achievable precision is only $\delta=0.10$, large enough to flip a deployment decision.

\begin{table}[!htbp]
\centering
\scriptsize
\caption{\textbf{Minimum overlap for a precision target} on the human-pool score $\hat F_m$ using $\hat p_o$ under per-rater sparse overlap ($n=500$, $K=4$, $F^{*} \in [0.6, 0.9]$, 300 trials, $\alpha_{\text{sig}}=0.05$). Ranges span the target $F^{*}$ values.}
\label{tab:overlap_required}
\begin{tabular*}{\columnwidth}{@{\extracolsep{\fill}}llrrr}
\toprule
& & \multicolumn{3}{c}{Min.\ $\rho$ for $\Pr[|\hat F_m - F^{*}| > \delta] \leq 0.05$} \\
\cmidrule(lr){3-5}
$L$ & Prevalence & $\delta\!=\!0.02$ & $\delta\!=\!0.05$ & $\delta\!=\!0.10$ \\
\midrule
\multirow{2}{*}{2} & Uniform & $\geq$75\% & 25\% & 5--10\% \\
 & Skewed (.8/.2) & $\geq$75\% & 25\% & 5--10\% \\
\midrule
\multirow{2}{*}{5} & Uniform & $\geq$75\% & 15--25\% & 5--10\% \\
 & Skewed (.7/.1/.1/.05/.05) & $\geq$75\% & 15--50\% & 5--10\% \\
\bottomrule
\end{tabular*}
\end{table}

\textbf{Overlap dominates rater count.}
Figure~\ref{fig:k_effect} shows diminishing returns from adding raters relative to increasing overlap. At $\rho = 0.05$, doubling $K$ from 2 to 10 lifts reliability only from 17\% to 40\%, whereas raising $\rho$ from 0.05 to 0.25 at $K = 3$ jumps reliability from 27\% to 63\%. Curves flatten after $K = 5$ while each $\rho$ step yields a larger vertical jump under both uniform and skewed prevalence, identifying $K = 3\text{--}5$ with $\rho \geq 0.25$ as the practical sweet spot.

\begin{figure}[!htbp]
\centering
\includegraphics[width=0.55\columnwidth]{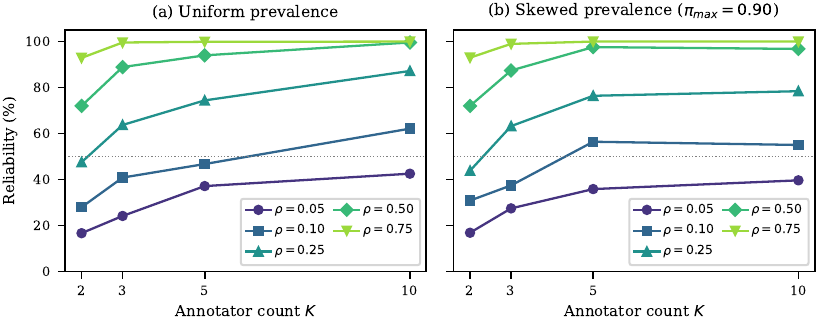}
\caption{\textbf{Reliability vs.\ annotator count at five overlap rates} ($n = 500$, $p_h = 0.80$). Reliability = \% of trials within $\pm 0.05$ of ground truth. Grey line marks 50\%.}
\label{fig:k_effect}
\end{figure}

\section{Conclusion}
\label{sec:conclusion}

We studied LLM-judge validation when only a small fraction of items receive multiple human labels. Sparse overlap injects $\gamma_F(\pi)/m$ variance into per-rater agreement comparisons (Thm.~\ref{thm:sparsity}), with $\gamma_F$ diverging for $\hat\kappa$ and $\hat\alpha$ under label skew; among the design levers ($F$, $\rho$, $\mathbf{D}$), overlap quantity is the necessary condition for reliable decisions. The two actionable levers are overlap \emph{quantity}, planned via recipe~\eqref{eq:overlap_recipe} with certification and ranking variants (Thm.~\ref{thm:planning}), and overlap \emph{allocation}, where \textsc{Strat} cuts estimator bias by an order of magnitude when strata are informative. Practical recipe: $F = \hat p_o$ with \textsc{Strat}, $\rho \geq 0.25$, $K = 3$--$5$ raters, extra overlap for ranking. Borderline judges ($\omega \approx 0.50$) represent a fundamental limit and should be flagged as inconclusive (Table~\ref{tab:rq3_borderline}).

\textbf{Limitations and future work.} Results assume i.i.d.\ items and batched annotation; clustered item difficulty demands cluster-aware stratification (Appendix~\ref{app:correlated}). \textsc{Strat} requires informative strata and degrades under extreme skew ($\pi_{\max} > 0.90$). Recipe~\eqref{eq:overlap_recipe} is asymptotic and needs a pre-collection variance estimate. The overlap-mask-independent-of-labels assumption (Thm.~\ref{thm:sparsity}) requires that annotation assignments are random or coverage-based, not self-selected. Future work covers adaptive within-rater policies, preference-based and rubric-based evaluation (which face the same sparse-estimation bottleneck), and non-deterministic judges (whose additional variance makes our overlap targets conservative lower bounds).

\bibliography{references}
\bibliographystyle{plainnat}

\clearpage
\appendix
%

\footnotesize

\section{Formal Proofs}
\label{app:proofs}

\subsection{Coefficient Definitions}
\label{app:coef_forms}

We collect the explicit sparse-data forms of the four agreement coefficients used throughout the paper (\S\ref{sec:agreement_coefficients}). All sums run only over items selected by the design mask $\mathbf{D}$; when $\rho = 1$ every form reduces to its standard textbook definition \citep{krippendorff2004content,cohen1960coefficient,gwet2008computing}.

\textbf{Pairwise observed agreement $\hat p_o$.}
For rater pair $(k,k')$, the overlap set is $\calI_{kk'} = \{i : D_{ik} = D_{ik'} = 1\}$, with $m_{kk'} = |\calI_{kk'}|$. The pairwise observed agreement is
\[
\hat p_o^{(kk')} = \frac{1}{m_{kk'}}\sum_{i \in \calI_{kk'}} \mathbf{1}[A_{ik} = A_{ik'}].
\]
No chance correction is applied: the estimator is simply the fraction of co-annotated items on which the two raters agree. This makes $\hat p_o$ unbiased with variance $p_o(1-p_o)/m_{kk'}$ (Thm.~\ref{thm:sparsity}(i)) and is why $\hat p_o$ serves as the default scoring function.

\textbf{Krippendorff's $\hat\alpha$} pools all within-item pairs across all raters into a single disagreement ratio:
\[
\hat\alpha = 1 - \frac{\hat D_o}{\hat D_e}, \quad
\hat D_o = \frac{\sum_{i} \sum_{k\neq k': D_{ik}D_{ik'}=1} \delta(A_{ik},A_{ik'})}{\sum_i n_i(n_i-1)}, \quad
\hat D_e = \frac{\sum_{\ell,\ell'} N_\ell N_{\ell'}\,\delta(c_\ell,c_{\ell'})}{N(N-1)},
\]
with $n_i = \sum_k D_{ik}$ (number of raters on item $i$), $\delta(c,c') = \mathbf{1}[c\neq c']$ for nominal data, $N = \sum_{i,k} D_{ik}$ (total label count), and $N_\ell = \sum_{i,k} D_{ik}\mathbf{1}[A_{ik}=c_\ell]$ (total count of category $\ell$). We write $\hat\alpha_{\text{dense}}$ when $\rho = 1$ and $\hat\alpha_m$ when the dependence on the multi-annotated count $m = |\calI^+|$ matters; the estimator is the same in both cases, only the mask distribution differs. As a ratio statistic, $\hat\alpha$ inherits an $O(1/m)$ bias from the random denominator $\hat D_e$ (proof in \S\ref{app:proof_sparsity}(ii)), and its amplification factor $\gamma_\alpha = O\bigl((1-p_e^\alpha)^{-2}\bigr)$ grows under skew (Thm.~\ref{thm:sparsity}(ii)).

\textbf{Cohen's pairwise $\hat\kappa$.}
For rater pair $(k,k')$ on the overlap set $\calI_{kk'}$:
\[
\hat\kappa^{(kk')} = \frac{\hat p_o^{(kk')} - \hat p_e^{(kk')}}{1 - \hat p_e^{(kk')}}, \qquad
\hat p_e^{(kk')} = \sum_{\ell=1}^{L} \hat\pi_\ell^{(k)}\,\hat\pi_\ell^{(k')},
\]
where the per-rater marginals are computed on the overlap set: $\hat\pi_\ell^{(k)} = m_{kk'}^{-1}\sum_{i \in \calI_{kk'}} \mathbf{1}[A_{ik}=c_\ell]$. The denominator $1 - p_e$ measures how much room exists above chance; for binary labels with marginals $(\pi_{\max}, 1-\pi_{\max})$ this simplifies to $1 - p_e = 2\pi_{\max}(1-\pi_{\max})$, which vanishes as $\pi_{\max} \to 1$. This is the \emph{prevalence paradox} \citep{feinstein1990high,byrt1993bias}: even when raters agree on nearly every item, $\hat\kappa$ can be low (or its variance can explode) because the chance-correction denominator shrinks. Under sparse overlap, $\gamma_\kappa = O((1-p_e^\kappa)^{-4})$ (Thm.~\ref{thm:sparsity}(iii)), making $\hat\kappa$ the most noise-sensitive of the four coefficients.

\textbf{Gwet's $\widehat{\mathrm{AC1}}$} \citep{gwet2008computing}.
AC1 shares $\hat\kappa$'s structure but replaces the rater-marginal chance term with a symmetric alternative:
\[
\widehat{\mathrm{AC1}}^{(kk')} = \frac{\hat p_o^{(kk')} - \hat p_e^{\mathrm{AC1}}}{1 - \hat p_e^{\mathrm{AC1}}}, \qquad
\hat p_e^{\mathrm{AC1}} = \frac{1}{L-1}\sum_{\ell=1}^{L} \hat\pi_\ell(1-\hat\pi_\ell),
\]
where $\hat\pi_\ell$ is the pooled marginal across both raters on $\calI_{kk'}$. The key structural property is that $1 - p_e^{\mathrm{AC1}} \geq (L-1)/L \geq 1/2$ for all $L \geq 2$ and all prevalence vectors $\pi$ (shown in \S\ref{app:proof_sparsity}(iv)). The denominator constants are therefore bounded, making $\widehat{\mathrm{AC1}}$ immune to the prevalence paradox. The cost is a different interpretation of ``chance agreement'': AC1's correction reflects category-count entropy rather than rater-specific labelling tendencies.

\subsection{Proof of Theorem~\ref{thm:sparsity}}
\label{app:proof_sparsity}

\begin{proof}
We prove each part of the theorem in turn.

\medskip\noindent\textbf{Part (i): $\hat p_o$.}\;
The pairwise observed agreement on an overlap subset of $m$ items is $\hat p_{o,m} = m^{-1}\sum_{i=1}^{m}\mathbf{1}[A_{ik}=A_{ik'}]$, a sample mean of i.i.d.\ $\mathrm{Bernoulli}(p_o)$ indicators (under the i.i.d.\ item assumption and overlap mask independent of labels). Therefore $\E[\hat p_{o,m}] = p_o$ (unbiased) and $\Var(\hat p_{o,m}) = p_o(1-p_o)/m = O(1/m)$.

\medskip\noindent\textbf{Part (ii): $\hat\alpha$.}\;
$\hat{\alpha}_m = 1 - \hat{D}_o / \hat{D}_e$ is a ratio statistic.
Let $\calI^+ = \{i \in \calI : \sum_k D_{ik} \geq 2\}$ denote the multi-annotated subset and $m = |\calI^+|$.
Define
\[
w_i = n_i(n_i-1), \qquad d_i = \frac{1}{w_i}\sum_{k \neq k'} \delta(A_{ik}, A_{ik'}).
\]
Under the balanced-overlap assumption, $w_i \equiv w$ on $\calI^+$.
Therefore $\sum_{i \in \calI^+} w_i = mw$ and $\sum_{i \in \calI^+} w_i d_i = w\sum_{i \in \calI^+} d_i$, so
\[
\hat{D}_o = \frac{\sum_{i \in \calI^+} w_i d_i}{\sum_{i \in \calI^+} w_i} = \frac{w\sum_{i \in \calI^+} d_i}{mw} = \frac{1}{m}\sum_{i \in \calI^+} d_i.
\]
Because $\calI^+$ is sampled uniformly and independently of the labels, the $d_i$ in the overlap subset are i.i.d.\ draws from the population item-level disagreement distribution, so $\hat{D}_o$ is an unbiased estimator of $D_o$ with variance $O(1/m)$.

For the denominator, recall the population label marginal $\pi_\ell = \Pr(A_{i,k} = c_\ell)$ from the Notation paragraph (well-defined under the rater-exchangeability of $P$), so
\[
D_e = \sum_{\ell=1}^{L}\sum_{\ell'=1}^{L} \pi_\ell\,\pi_{\ell'}\,\delta(c_\ell, c_{\ell'}).
\]
Under the i.i.d.\ superpopulation assumption, the empirical marginal vector $\hat{\pi}_\ell = N_\ell/N$ is a sample proportion with $\Cov(\hat{\pi}) = \Sigma/n$ for a fixed population covariance matrix~$\Sigma$.
Since $D_e$ is a smooth (quadratic) function of~$\pi$, the multivariate delta method \citep[Theorem~3.1]{vandervaart1998asymptotic} gives
\[
\Var(\hat{D}_e) = \nabla D_e^\top \frac{\Sigma}{n}\,\nabla D_e + O(1/n^2) = O(1/n),
\]
with bias $O(1/n)$ from the second-order term.
No annotator model is needed: the variance bound follows from the i.i.d.\ item assumption and the standard result for sample proportions.

Now apply the bivariate delta method to the smooth map $h(x,y) = 1 - x/y$ around $(D_o, D_e)$, with $D_e > 0$ by assumption.
The partial derivatives are $\partial_x h = -1/D_e$ and $\partial_y h = D_o/D_e^2$.
The first-order delta method gives the variance:
\[
\Var(\hat{\alpha}_m) = \frac{\Var(\hat{D}_o)}{D_e^2} + \frac{D_o^2\,\Var(\hat{D}_e)}{D_e^4} + O(1/m^2) = O(1/m) + O(1/n) + O(1/m^2) = O(1/m)
\]
since $m \leq n$, where the $O(1/m^2)$ remainder collects all higher-order terms from the Taylor expansion.
The second-order terms give the bias:
even when $\hat{D}_o$ and $\hat{D}_e$ are individually unbiased to first order, the ratio $\hat{D}_o/\hat{D}_e$ satisfies $\E[\hat{D}_o/\hat{D}_e] \neq \E[\hat{D}_o]/\E[\hat{D}_e]$ because $x \mapsto 1/x$ is convex (Jensen's inequality applied to the random denominator), producing a second-order bias of size $O(\Var(\hat{D}_e)/D_e^2) = O(1/n)$ from the denominator and $O(\Var(\hat{D}_o)/D_e) = O(1/m)$ from the numerator, giving
\[
\E[\hat{\alpha}_m] - \alpha = O(1/m), \qquad \Var(\hat{\alpha}_m) = O(1/m),
\]
with constants depending on $D_e^{-1}$.

\medskip\noindent\textbf{Part (iii): $\hat\kappa$.}\;
The population target is $\kappa = g(p_o, p_e)$ and the sparse estimator is $\hat{\kappa}_m = g(\hat{p}_o, \hat{p}_e)$, where $g(x,y) = (x-y)/(1-y)$.
The gradient of $g$, evaluated at the population values $(p_o, p_e)$, is $\nabla g = (1/(1-p_e),\; (p_o - 1)/(1-p_e)^2)$.
By the multivariate delta method:
\[
    \Var(\hat{\kappa}) \approx \frac{\Var(\hat{p}_o)}{(1-p_e)^2} + \frac{(p_o - 1)^2\,\Var(\hat{p}_e)}{(1-p_e)^4} - \frac{2(p_o-1)\,\Cov(\hat{p}_o, \hat{p}_e)}{(1-p_e)^3}
\]
Under overlap sampling over $m$ items, $\Var(\hat{p}_o)$, $\Var(\hat{p}_e)$, and $\Cov(\hat{p}_o, \hat{p}_e)$ are each $O(1/m)$.
Therefore the variance remains $O(1/m)$ for fixed $p_e<1$, but the denominator constants include powers of $(1-p_e)^{-1}$; a conservative bound is
\[
\Var(\hat{\kappa}_m) = O\!\left(\frac{1}{m(1-p_e)^4}\right).
\]
This holds for any $L \geq 2$. For binary labels with symmetric marginals, $1 - p_e = 2\pi_{\max}(1-\pi_{\max})$, so the denominator constants grow as prevalence concentrates.
In general, as $\pi_{\max} \to 1$ we have $p_e \to 1$ for any $L$, so the variance constant grows without bound.

\medskip\noindent\textbf{Part (iv): $\widehat{\mathrm{AC1}}$.}\;
The population target is $\mathrm{AC1} = g(p_o, p_e^{\mathrm{AC1}})$ with $g(x,y) = (x-y)/(1-y)$, identical in form to $\kappa$; only the chance-correction term differs: $p_e^{\mathrm{AC1}} = \tfrac{1}{L-1}\sum_\ell \pi_\ell(1-\pi_\ell)$.
The delta-method variance has the same structure as Part~(iii):
\[
\Var(\widehat{\mathrm{AC1}}_m) = O\!\left(\frac{1}{m\,(1 - p_e^{\mathrm{AC1}})^4}\right).
\]
The critical difference is the denominator. For $L$ labels with any prevalence vector~$\pi$:
\[
1 - p_e^{\mathrm{AC1}} = 1 - \frac{1}{L-1}\sum_\ell \pi_\ell(1-\pi_\ell) = 1 - \frac{1 - \sum_\ell \pi_\ell^2}{L-1} = \frac{L - 2 + \sum_\ell \pi_\ell^2}{L-1}.
\]
Since $\sum_\ell \pi_\ell^2 \geq 1/L$ (by Cauchy--Schwarz) and $L \geq 2$:
\[
1 - p_e^{\mathrm{AC1}} \;\geq\; \frac{L - 2 + 1/L}{L-1} \;=\; \frac{(L-1)^2}{L(L-1)} \;=\; \frac{L-1}{L} \;\geq\; \frac{1}{2}.
\]
Hence $(1 - p_e^{\mathrm{AC1}})^{-4} \leq 16$ for all $L \geq 2$ and all $\pi$, giving $\Var(\widehat{\mathrm{AC1}}_m) = O(1/m)$ with a prevalence-independent denominator constant.
This is why $\widehat{\mathrm{AC1}}$ neutralises the prevalence paradox that afflicts $\hat\kappa$.
\end{proof}

\subsection{Proof of Theorem~\ref{thm:planning}}
\label{app:proof_planning}

\begin{proof}
Under the i.i.d.\ item assumption (Theorem~\ref{thm:sparsity}) and the observed-agreement scoring rule used in Theorem~\ref{thm:planning}, the per-rater score difference $d_k = S(j;r_k) - S(r_{-k};r_k)$ is a sample mean of $m$ i.i.d.\ item-level terms $\xi_i = \mathbf{1}[A_{ij}=A_{i,\text{held-in}}] - \mathbf{1}[A_{ir_k}=A_{i,\text{held-in}}]$, each with finite mean and variance. Set $\mu_d := \E[\xi_i] = \E[d_k]$ and $\sigma_0^2 := \Var(\xi_i)$. Then $\Var(d_k) = \sigma_0^2/m$ and the central limit theorem gives the approximation $d_k \approx N(\mu_d, \sigma_0^2/m)$ for large enough $m$. For chance-corrected ratio coefficients, the same planning form can be used as a delta-method approximation after estimating the corresponding variance constant, but those ratios are not itemwise averages.

\medskip\noindent\textbf{Part (i).}\;
The per-rater outcome (Eq.~\eqref{eq:per_rater}) is $T_k = \mathbf{1}[d_k \geq -\varepsilon]$, so a false rejection occurs iff $d_k < -\varepsilon$. Standardising under the Gaussian approximation:
\[
\Pr(d_k < -\varepsilon) \;\approx\; \Phi\!\left(\frac{-\varepsilon - \mu_d}{\sigma_0/\sqrt{m}}\right) \;=\; \Phi\!\left(\frac{-(\mu_d + \varepsilon)\sqrt{m}}{\sigma_0}\right).
\]

\medskip\noindent\textbf{Part (ii).}\;
For $\mu_d + \varepsilon > 0$, the right-hand side of (i) is strictly decreasing in $m$. Solving the approximate inequality $\Phi\!\bigl(-(\mu_d+\varepsilon)\sqrt{m}/\sigma_0\bigr) \leq \alpha_{\text{sig}}/2$ gives the sufficient planning rule
\[
\frac{(\mu_d + \varepsilon)\sqrt{m}}{\sigma_0} \;\geq\; z_{1-\alpha_{\text{sig}}/2}, \qquad \text{so} \qquad m \;\geq\; \frac{z_{1-\alpha_{\text{sig}}/2}^{\,2}\,\sigma_0^2}{(\mu_d + \varepsilon)^2}.
\]
We use the minimum integer overlap $m^{*}_{\text{cert}} = \lceil z_{1-\alpha_{\text{sig}}/2}^{\,2}\,\sigma_0^2 / (\mu_d+\varepsilon)^2 \rceil$ as an asymptotic planning target.

\medskip\noindent\textbf{Part (iii).}\;
To rank $J$ judges, consider the best judge $j^*$ with population score $S_{j^*} \geq S_j$ for all $j$.
Under sparse overlap, estimated scores are approximately Gaussian: $\hat S_j \approx N(S_j, \sigma_j^2/m)$.
With approximately equal per-judge variance $\sigma_j^2 \approx \sigma_0^2$, the estimated score difference between the best judge and any competitor $j$ satisfies
\[
\hat S_{j^*} - \hat S_j \;\approx\; N\!\bigl(\Delta_{j^*\!j},\; 2\sigma_0^2/m\bigr), \qquad \Delta_{j^*\!j} := S_{j^*} - S_j.
\]
A top-1 error occurs when any competitor's estimated score exceeds $\hat S_{j^*}$. By a union bound over $J-1$ competitors:
\[
\Pr[\text{top-1 error}] \;\leq\; (J{-}1)\,\Phi\!\Bigl(-\Delta_{\min}\sqrt{m/(2\sigma_0^2)}\Bigr),
\]
where $\Delta_{\min} = \min_{j \neq j^*} \Delta_{j^*\!j}$. Setting this $\leq \alpha$ and solving:
\[
m \;\geq\; m^{*}_{\text{rank}} \;=\; \Bigl\lceil 2\,z_{1-\alpha/(J-1)}^{\,2}\,\sigma_0^2\,/\,\Delta_{\min}^2 \Bigr\rceil.
\]
Comparing $m^*_{\text{rank}}$ to $m^*_{\text{cert}}$, three effects compound: (a)~the denominator shrinks from $(\mu_d+\varepsilon)^2$ to $\Delta_{\min}^2$ (pairwise judge margins are tighter than the human--judge gap); (b)~the critical value inflates from $z_{1-\alpha_{\text{sig}}/2}$ to $z_{1-\alpha/(J-1)}$ (Bonferroni over $J-1$ comparisons); (c)~a factor of~2 appears from comparing two noisy estimates. Together these explain the empirical finding in \S\ref{sec:rq3} that ranking 10 judges needs substantially higher $\rho$ than certifying any single one.
\end{proof}

\section{Implementation and Data Details}
\label{app:impl_data}

\subsection{Motivating Example: Depth vs.\ Breadth on ISIC Lesion}
\label{app:depth_breadth}

Figure~\ref{fig:depth_breadth} in the introduction presents the paper's motivating example.
We use the ISIC Lesion benchmark \citep{codella2018skin}: 488 evaluation items (dermoscopic feature ratings across skin lesion images) with 6 student annotators and 4 LLM judges.
All annotators rated all items on a 0--6 severity scale, giving a dense human-agreement reference of $\hat{\alpha}_{\text{dense}} = 0.351$.
We select 3 of the 6 annotators and simulate a realistic budget in which each annotator can label only 50 items.

We compare three ways to spend the same $3 \times 50 = 150$ annotations: \textbf{depth-first}, where all annotators label items 1--50 in data order; \textbf{random spread}, where each annotator independently samples 50 items; and \textbf{stratified shared overlap}, where all annotators label the same 50 items chosen by stratified sampling from the judge labels.
For random spread and stratified shared overlap, we run 500 Monte Carlo trials across all 4 judges and compare each sparse decision to the full-data ground truth.

Table~\ref{tab:app_depth_breadth} reports the exact outcome.
Depth-first is badly biased because the early slice overrepresents higher-severity cases, inflating the human ceiling.
Random spread is roughly unbiased on average but leaves only about 14 multi-annotated items, so the estimate is too noisy to support stable deployment decisions.
Stratified shared overlap keeps the overlap dense and representative, which is why it serves as the paper's motivating design recommendation.

\begin{table}[!htbp]
\centering
\scriptsize
\caption{Exact numbers for the motivating depth-vs.-breadth example on ISIC Lesion. Agreement is measured by Krippendorff's $\alpha$; the dense reference is $\hat\alpha_{\text{dense}} = 0.351$.}
\label{tab:app_depth_breadth}
\begin{tabular}{lcccc}
\toprule
Strategy & Items with $\geq 2$ human labels & Human agreement ($\hat\alpha$) & Bias vs.\ full & Wrong decisions \\
\midrule
Depth-first & 50 & 0.535 & $+$0.184 & 75.0\% \\
Random spread & $\approx 14$ & $0.340 \pm 0.183$ & $-$0.011 & 39.9\% \\
Stratified shared overlap & 50 & $0.353 \pm 0.062$ & $+$0.001 & 26.5\% \\
\bottomrule
\end{tabular}
\end{table}

\subsection{Synthetic Data Generation}
\label{app:synthetic}

\paragraph{Noise channel model.}
Let $y_i \sim \text{Cat}(\pi)$ denote the true label for item $i$, drawn i.i.d.\ from a categorical distribution with prevalence vector $\pi = (\pi_1, \ldots, \pi_L)$.
Each annotator $k$ (human or judge) independently produces
\[
A_{ik} = \begin{cases}
y_i & \text{with probability } p_k, \\
\text{Uniform}(\calC \setminus \{y_i\}) & \text{with probability } 1 - p_k,
\end{cases}
\]
where $p_k$ is the annotator's agreement rate: $p_h$ for humans, $p_j$ for the judge.
All humans share the same $p_h$, making them exchangeable; the judge's $p_j$ is varied across experiments.

\paragraph{Default parameters.}
Table~\ref{tab:app_synth_params} summarises the two parameter profiles used across the paper.
The \emph{main} profile ($p_h = 0.85$, 300 trials) applies to all experiments in \S\S\ref{sec:rq1}--\ref{sec:rq3} and their appendix tables unless otherwise noted.
The \emph{sensitivity} profile ($p_h = 0.80$, varying $n$, $K$, and trial counts) applies to the ordinal-label, corpus-size, budget-tradeoff, and correlated-item analyses in \S\ref{sec:rq4} and Appendix~\ref{app:rq4_full}; each caption states its exact configuration.

\begin{table}[!htbp]
\centering
\scriptsize
\caption{Default synthetic-experiment parameter profiles. Every caption states the exact settings used; this table serves as a quick reference.}
\label{tab:app_synth_params}
\begin{tabular}{lcc}
\toprule
Parameter & Main profile & Sensitivity profile \\
\midrule
Corpus size $n$ & 500 & 100--2\,000 \\
Human annotators $K$ & 4 & 2--10 \\
Label categories $L$ & 2 & 2 or 5 \\
Human accuracy $p_h$ & 0.85 & 0.80 \\
Tolerance $\varepsilon$ & 0.05 & 0.05 \\
Monte Carlo trials & 300 & 50--500 \\
Scoring function $F$ & $\hat p_o$ & $\hat p_o$ \\
\bottomrule
\end{tabular}
\end{table}

\paragraph{Model limitations.}
The symmetric noise channel does not capture correlated annotation errors, item-difficulty heterogeneity, or rater-specific biases.
Table~\ref{tab:app_correlated} in \S\ref{app:correlated} probes robustness to the first two: overlap guidelines hold under mild and moderate item clustering, but strong clustering (${\leq}10$ clusters) degrades reliability at low overlap, requiring $\rho \geq 0.25$ for recovery.

\section{Supporting Experiments}
\label{app:tables}

This section collects the full tables backing the main-text experiments. Main-text experiments use $p_h = 0.85$ and 300 trials (see \S\ref{sec:setup}). Several sensitivity analyses below (ordinal labels, correlated items, corpus size, budget tradeoff) use $p_h = 0.80$ and varying trial counts to probe a broader parameter space; qualitative findings are consistent across both settings and each caption states the exact configuration.

\subsection{Coefficient and Label-Count Details}
\label{app:rq1_full}
\label{app:decision_full}
\label{app:metric_comparison}

This subsection collects two related analyses backing \S\ref{sec:rq1}: coefficient-swap pass rates and ordinal-label ($L=5$) wrong-decision rates.

\paragraph{Coefficient swap.}
Table~\ref{tab:app_coef_swap} reports the deploy/reject pass rate when the base coefficient $F$ is swapped between $\hat p_o$, $\hat\alpha$, $\hat\kappa$, and $\widehat{\mathrm{AC1}}$. Under uniform prevalence, all coefficients perform near-perfectly on the strongest judges ($p_j = 0.95$); on the borderline gap ($p_j = 0.90$), $\hat{p}_o$ leads at $\rho = 0.05$ (94.7\% vs.\ 87--88\% for the others). Under skewed prevalence ($\pi_{\max} = 0.90$), $\hat{p}_o$ leads more decisively: at $\rho = 0.05$ with $p_j = 0.90$, $\hat{p}_o$ reaches 95.3\% while $\hat\alpha$ drops to 87.3\% and $\hat\kappa$ to 87.8\%. $\widehat{\mathrm{AC1}}$ remains the best chance-corrected alternative (93.2\%). This is the bias/variance flow-through of Theorem~\ref{thm:sparsity}: the diverging amplification $\gamma_\kappa$ (part~iii) and $\gamma_\alpha$ (part~ii) inject additional noise under skew.

\begin{table}[!htbp]
\centering
\scriptsize
\caption{Pass rates (\%) for $\omega \geq 0.5$ when the base coefficient $F$ is swapped. Synthetic $L = 2$, $n = 500$, $K = 4$, $p_h = 0.85$, $\varepsilon = 0.05$, averaged over random and stratified designs, 300 trials per condition.}
\label{tab:app_coef_swap}
\begin{tabular}{llcrrrr}
\toprule
& & & \multicolumn{4}{c}{$F$ (pass rate, \%)} \\
\cmidrule(lr){4-7}
Prevalence & $p_j$ (gt) & $\rho$ & $\hat{p}_o$ & $\hat{\alpha}$ & $\hat{\kappa}$ & $\widehat{\mathrm{AC1}}$ \\
\midrule
\multirow{6}{*}{Uniform} & \multirow{3}{*}{$0.90$ (pass)} & $0.05$ & 94.7 & 87.7 & 87.5 & 87.8 \\
 & & $0.10$ & 98.7 & 94.8 & 94.8 & 94.7 \\
 & & $0.25$ & 100.0 & 98.7 & 98.7 & 99.0 \\
\cmidrule(lr){2-7}
 & \multirow{3}{*}{$0.95$ (pass)} & $0.05$ & 99.8 & 98.7 & 98.7 & 98.7 \\
 & & $0.10$ & 99.8 & 99.3 & 99.3 & 99.3 \\
 & & $0.25$ & 100.0 & 100.0 & 100.0 & 100.0 \\
\midrule
\multirow{6}{*}{Skewed (.90)} & \multirow{3}{*}{$0.90$ (pass)} & $0.05$ & 95.3 & 87.3 & 87.8 & 93.2 \\
 & & $0.10$ & 98.8 & 90.7 & 91.5 & 97.5 \\
 & & $0.25$ & 99.8 & 97.3 & 97.3 & 99.7 \\
\cmidrule(lr){2-7}
 & \multirow{3}{*}{$0.95$ (pass)} & $0.05$ & 99.3 & 93.2 & 93.3 & 98.3 \\
 & & $0.10$ & 99.8 & 97.8 & 98.3 & 99.3 \\
 & & $0.25$ & 100.0 & 99.8 & 99.8 & 100.0 \\
\bottomrule
\end{tabular}
\end{table}

\paragraph{Ordinal labels ($L = 5$).}
\label{app:ordinal}
Table~\ref{tab:app_ordinal} repeats the wrong-decision analysis with $L = 5$ ordinal categories under uniform prevalence, comparing Krippendorff's $\alpha$ (nominal) against linearly-weighted $\kappa$. The results are consistent with the binary case: at $\rho = 0.05$, wrong decision rates range from 9--44\%, closely matching the $L = 2$ numbers. Weighted $\kappa$ shows slightly higher error rates than $\alpha$ (e.g., 24\% vs.\ 21\% for $p_j = 0.85$), confirming that overlap guidelines generalise beyond binary labels.

\begin{table}[!htbp]
\centering
\scriptsize
\caption{Wrong decision rates (\%) with ordinal labels ($L = 5$, uniform prevalence, $n = 500$, $K = 4$, $p_h = 0.80$, 50 datasets $\times$ 50 trials).}
\label{tab:app_ordinal}
\begin{tabular}{llrrrr}
\toprule
Metric & $p_j$ & $\rho = 0.05$ & $0.10$ & $0.25$ & $0.50$ \\
\midrule
\multirow{5}{*}{$\alpha$ (nominal)} & 0.70 & 37.3 & 29.2 & 19.9 & 11.6 \\
& 0.76 & 43.9 & 41.0 & 34.0 & 27.5 \\
& 0.80 & 33.9 & 29.3 & 19.6 & 9.0 \\
& 0.85 & 20.5 & 10.6 & 2.8 & 0.5 \\
& 0.90 & 8.8 & 2.8 & 0.0 & 0.0 \\
\midrule
\multirow{5}{*}{Weighted $\kappa$} & 0.70 & 38.7 & 31.0 & 22.7 & 13.1 \\
& 0.76 & 41.5 & 40.5 & 32.3 & 24.8 \\
& 0.80 & 35.2 & 34.2 & 23.2 & 13.3 \\
& 0.85 & 24.1 & 13.5 & 4.8 & 0.8 \\
& 0.90 & 13.0 & 5.2 & 0.5 & 0.0 \\
\bottomrule
\end{tabular}
\end{table}

\subsection{Stratification Family on Real Benchmarks}
\label{app:rq2_full}
\label{app:skew_masked_full}

This subsection reports the full deploy/reject pivots for the stratification family comparison whose test-independent design-quality metrics were summarised in \S\ref{sec:rq2} (Table~\ref{tab:rq2_real}). All four family members are evaluated across two prevalence regimes: unmasked (natural prevalence) and skew-masked (binarised). Each condition uses 300 trials with the leave-one-out pipeline (Eqs.~\eqref{eq:per_rater}--\eqref{eq:winning_rate}).

\paragraph{Unmasked real benchmarks.} Table~\ref{tab:app_unmasked} shows the family on the original CeBaB-stars (5-class) and SummEval (5-point Likert) labels at their natural prevalence ($\pi_{\max} \in [0.26, 0.56]$). At low overlap ($\rho \in \{0.05, 0.10\}$), \textsc{Strat} matches or beats every competitor on both benchmarks. At $\rho = 0.25$ the gap among all designs collapses to within ${\sim}0.04$; the sequential members do not gain ground in either regime.

\begin{table}[!htbp]
\centering
\caption{Unmasked real-benchmark family pivot (mean wrong-decision rate across reject-judges, lower is better). ``$-$'' marks entries where \textsc{Seq-Coverage} could not be evaluated on SummEval because its reduced-coverage trials degenerated under SummEval's three-annotator setting. Bold marks the per-row winner.}
\label{tab:app_unmasked}
\small
\setlength{\tabcolsep}{4pt}
\begin{tabular}{llrrrrr}
\toprule
Dataset & $\pi_{\max}$ & $\rho$ & \textsc{Random} & \textsc{Strat} & \textsc{Seq-Ref.} & \textsc{Seq-Cov.} \\
\midrule
\multirow{3}{*}{cebab\_stars} & \multirow{3}{*}{0.26} & .05 & .727 & \textbf{.552} & .680 & .648 \\
 & & .10 & .513 & \textbf{.442} & .557 & .470 \\
 & & .25 & \textbf{.240} & .258 & .270 & .275 \\
\midrule
\multirow{3}{*}{summeval} & \multirow{3}{*}{0.55} & .05 & .163 & \textbf{.000} & .168 & $-$ \\
 & & .10 & .024 & \textbf{.000} & .028 & $-$ \\
 & & .25 & \textbf{.000} & \textbf{.000} & \textbf{.000} & $-$ \\
\bottomrule
\end{tabular}
\end{table}

\paragraph{Skew-masked real benchmarks.} To probe the high-skew regime, we binarize each benchmark at one or more thresholds $t$, producing six configurations spanning $\pi_{\max} \in [0.55, 0.96]$.

\begin{table}[!htbp]
\centering
\caption{Skew-masked real-benchmark family pivot (300 trials per condition, mean wrong-decision rate across reject judges). Bold marks the per-row winner.}
\label{tab:app_skew_masked_full}
\small
\setlength{\tabcolsep}{4pt}
\begin{tabular}{llrrrrr}
\toprule
Dataset & $\pi_{\max}$ & $\rho$ & \textsc{Random} & \textsc{Strat} & \textsc{Seq-Ref.} & \textsc{Seq-Cov.} \\
\midrule
\multirow{3}{*}{summeval@$t{=}3$} & \multirow{3}{*}{0.55} & .05 & .000 & .000 & .000 & .000 \\
 & & .10 & .000 & .000 & .000 & .000 \\
 & & .25 & .000 & .000 & .000 & .000 \\
\midrule
\multirow{3}{*}{lesion@$t{=}0$} & \multirow{3}{*}{0.65} & .05 & .110 & .075 & \textbf{.045} & .075 \\
 & & .10 & .010 & \textbf{.005} & \textbf{.005} & .010 \\
 & & .25 & \textbf{.000} & \textbf{.000} & \textbf{.000} & \textbf{.000} \\
\midrule
\multirow{3}{*}{summeval@$t{=}2$} & \multirow{3}{*}{0.72} & .05 & .001 & .002 & \textbf{.000} & .002 \\
 & & .10 & .000 & .000 & .000 & .000 \\
 & & .25 & .000 & .000 & .000 & .000 \\
\midrule
\multirow{3}{*}{lesion@$t{=}1$} & \multirow{3}{*}{0.74} & .05 & .310 & .306 & .294 & \textbf{.276} \\
 & & .10 & .159 & .248 & .169 & \textbf{.128} \\
 & & .25 & .084 & .149 & \textbf{.060} & .064 \\
\midrule
\multirow{3}{*}{lesion@$t{=}2$} & \multirow{3}{*}{0.93} & .05 & .557 & \textbf{.487} & .545 & .567 \\
& & .10 & \textbf{.357} & .365 & .377 & .382 \\
 & & .25 & .168 & .193 & .143 & \textbf{.098} \\
\midrule
\multirow{3}{*}{cebab\_aspects@$t{=}0$} & \multirow{3}{*}{0.96} & .05 & .886 & \textbf{.406} & .882 & .956 \\
 & & .10 & .513 & \textbf{.149} & .488 & .815 \\
 & & .25 & .042 & \textbf{.013} & .032 & .059 \\
\bottomrule
\end{tabular}
\end{table}

\paragraph{Summary at high skew.}
At $\pi_{\max} \geq 0.90$, out of 54 paired comparisons, \textsc{Strat} beats the sequential alternative by $\geq 0.10$ in 28 (52\%); the sequential design beats \textsc{Strat} by $\geq 0.10$ in only 1 (2\%); and the mean improvement of moving from \textsc{Strat} to a sequential design is $-0.234$ (i.e.\ \textsc{Strat} is on average 23\,pp better at high skew). These numbers support the recommendation in \S\ref{sec:rq2} that static stratification is the preferred default.

\subsection{Real LLM Judge Details}
\label{app:rq3_full}
\label{app:ranking_stability}

\paragraph{Ground-truth rankings (by $\hat p_o$).}
Table~\ref{tab:app_gt_rankings} lists the dense-reference $\hat p_o$ scores that define each judge's ground-truth ranking and deploy/reject status on each benchmark.
Judges are sorted by mean $\hat p_o$ across benchmarks.
Per-benchmark rankings vary substantially: for instance, Gemini Pro ranks 1st on WAX but 10th on SummEval, illustrating why ranking stability requires high overlap (\S\ref{sec:rq3}).

\begin{table}[!htbp]
\centering
\scriptsize
\caption{Ground-truth $\hat p_o$ scores (dense reference, $\rho = 1$) for each judge--benchmark pair. Judges sorted by mean score across the four benchmarks.}
\label{tab:app_gt_rankings}
\setlength{\tabcolsep}{4pt}
\begin{tabular}{lrrrr}
\toprule
Judge & WAX & CeBaB-asp. & CeBaB-stars & SummEval \\
\midrule
GPT-5.4       & .293 & .899 & .635 & .386 \\
Gemini Pro    & .374 & .877 & .585 & .201 \\
GPT-5.2       & .305 & .860 & .675 & .378 \\
Cl.\ Sonnet 4.5 & .325 & .868 & .585 & .342 \\
Cl.\ Opus 4.5  & .350 & .865 & .531 & .349 \\
GPT-4o        & .264 & .860 & .655 & .263 \\
Gemini Flash  & .329 & .864 & .485 & .226 \\
Llama-3.1     & .203 & .816 & .570 & .350 \\
GPT-4o-mini   & .220 & .833 & .590 & .276 \\
Mistral-v0.3  & .154 & .763 & .460 & .456 \\
\bottomrule
\end{tabular}
\end{table}

\subsubsection{Per-Judge Wrong-Decision Rates}
\label{app:real_strat}

Tables~\ref{tab:rq3_real} and~\ref{tab:rq3_borderline} in the main body summarise aggregated wrong-decision rates. This section provides per-judge detail for each benchmark under the leave-one-out pipeline ($F = \hat p_o$, $\varepsilon = 0.05$, 300 trials per condition).

\paragraph{SummEval.}
On SummEval (3 human annotators, 6\,400 dimension-level items), all 10 judges have dense $\omega = 0$ and are correctly rejected at every overlap rate and design (0\% wrong decisions uniformly); a per-judge table is omitted because every entry is zero.

\paragraph{CeBaB (star ratings).}
Table~\ref{tab:app_real_strat_cebab} shows per-judge wrong-decision rates at $\rho \in \{0.05, 0.10, 0.25, 0.50\}$. Nine of ten judges are ground-truth pass; only Gemini Flash ($\omega = 0.20$) is reject. Mistral-v0.3 ($\omega = 0.50$) sits exactly on the boundary: its wrong-decision rate barely decays with $\rho$.

\begin{table}[!htbp]
\centering
\caption{Wrong-decision rates on CeBaB star ratings ($F = \hat p_o$, $\varepsilon = 0.05$, 300 trials). Bold = best design per row.}
\label{tab:app_real_strat_cebab}
\small
\setlength{\tabcolsep}{3.5pt}
\begin{tabular}{llrrrrr}
\toprule
Judge (decision) & $\rho$ & \textsc{Rand.} & \textsc{Strat} & \textsc{Seq-R} & \textsc{Seq-C} & Type \\
\midrule
\multirow{4}{*}{\shortstack[l]{GPT-5.2\\(pass, $\omega{=}1.00$)}}
  & .05 & .130 & \textbf{.037} & .103 & .217 & FR \\
  & .10 & .037 & \textbf{.017} & .037 & .030 & FR \\
  & .25 & .000 & .000 & .000 & .000 & FR \\
  & .50 & .000 & .000 & .000 & .000 & FR \\
\midrule
\multirow{4}{*}{\shortstack[l]{GPT-4o\\(pass, $\omega{=}1.00$)}}
  & .05 & .113 & \textbf{.020} & .180 & .223 & FR \\
  & .10 & .040 & \textbf{.007} & .027 & .060 & FR \\
  & .25 & .000 & .000 & .003 & .007 & FR \\
  & .50 & .000 & .000 & .000 & .000 & FR \\
\midrule
\multirow{4}{*}{\shortstack[l]{GPT-5.4\\(pass, $\omega{=}1.00$)}}
  & .05 & .153 & \textbf{.053} & .163 & .263 & FR \\
  & .10 & .047 & \textbf{.017} & .043 & .113 & FR \\
  & .25 & .003 & .000 & .000 & .007 & FR \\
  & .50 & .000 & .000 & .000 & .000 & FR \\
\midrule
\multirow{4}{*}{\shortstack[l]{GPT-4o-mini\\(pass, $\omega{=}1.00$)}}
  & .05 & .233 & \textbf{.067} & .227 & .353 & FR \\
  & .10 & .080 & \textbf{.017} & .093 & .093 & FR \\
  & .25 & .030 & .000 & .013 & .027 & FR \\
  & .50 & .000 & .000 & .000 & .000 & FR \\
\midrule
\multirow{4}{*}{\shortstack[l]{Llama-3.1\\(pass, $\omega{=}.90$)}}
  & .05 & .270 & \textbf{.113} & .297 & .430 & FR \\
  & .10 & .150 & \textbf{.077} & .163 & .260 & FR \\
  & .25 & .090 & \textbf{.010} & .083 & .077 & FR \\
  & .50 & .007 & .000 & .007 & .013 & FR \\
\midrule
\multirow{4}{*}{\shortstack[l]{Gemini Pro\\(pass, $\omega{=}.90$)}}
  & .05 & .247 & \textbf{.143} & .293 & .337 & FR \\
  & .10 & .170 & \textbf{.077} & .193 & .220 & FR \\
  & .25 & .087 & \textbf{.010} & .043 & .090 & FR \\
  & .50 & .000 & .003 & .003 & .000 & FR \\
\midrule
\multirow{4}{*}{\shortstack[l]{Cl.\ Sonnet 4.5\\(pass, $\omega{=}.90$)}}
  & .05 & .287 & \textbf{.140} & .303 & .413 & FR \\
  & .10 & .170 & \textbf{.070} & .160 & .217 & FR \\
  & .25 & .110 & \textbf{.010} & .100 & .067 & FR \\
  & .50 & .007 & .000 & .020 & .020 & FR \\
\midrule
\multirow{4}{*}{\shortstack[l]{Cl.\ Opus 4.5\\(pass, $\omega{=}.80$)}}
  & .05 & .423 & \textbf{.177} & .380 & .570 & FR \\
  & .10 & .280 & \textbf{.120} & .280 & .347 & FR \\
  & .25 & .217 & \textbf{.033} & .167 & .247 & FR \\
  & .50 & .020 & \textbf{.007} & .050 & .053 & FR \\
\midrule
\multirow{4}{*}{\shortstack[l]{Mistral-v0.3\\(pass, $\omega{=}.50$)}}
  & .05 & .503 & \textbf{.330} & .593 & .700 & FR \\
  & .10 & .503 & \textbf{.323} & .503 & .617 & FR \\
  & .25 & .607 & \textbf{.300} & .597 & .540 & FR \\
  & .50 & .497 & \textbf{.267} & .503 & .483 & FR \\
\midrule
\multirow{4}{*}{\shortstack[l]{Gemini Flash\\(reject, $\omega{=}.20$)}}
  & .05 & .433 & .567 & .453 & \textbf{.307} & FA \\
  & .10 & .423 & .520 & .480 & \textbf{.383} & FA \\
  & .25 & .307 & .320 & \textbf{.250} & .357 & FA \\
  & .50 & .210 & .207 & .223 & \textbf{.307} & FA \\
\bottomrule
\end{tabular}
\end{table}

\paragraph{WAX.}
Table~\ref{tab:app_real_strat_wax} shows WAX results (246 items, 8 humans). Seven judges are ground-truth pass, three are reject.

\begin{table}[!htbp]
\centering
\caption{Wrong-decision rates on WAX ($F = \hat p_o$, $\varepsilon = 0.05$, 300 trials). Bold = best design per row.}
\label{tab:app_real_strat_wax}
\small
\setlength{\tabcolsep}{3.5pt}
\begin{tabular}{llrrrrr}
\toprule
Judge (decision) & $\rho$ & \textsc{Rand.} & \textsc{Strat} & \textsc{Seq-R} & \textsc{Seq-C} & Type \\
\midrule
\multirow{4}{*}{\shortstack[l]{Gemini Pro\\(pass, $\omega{=}.75$)}}
  & .05 & .370 & \textbf{.033} & .360 & .433 & FR \\
  & .10 & .303 & \textbf{.023} & .343 & .493 & FR \\
  & .25 & .033 & \textbf{.000} & .037 & .087 & FR \\
  & .50 & .000 & .000 & .000 & .000 & FR \\
\midrule
\multirow{4}{*}{\shortstack[l]{Gemini Flash\\(pass, $\omega{=}.75$)}}
  & .05 & .553 & \textbf{.180} & .643 & .623 & FR \\
  & .10 & .510 & \textbf{.073} & .503 & .650 & FR \\
  & .25 & .217 & \textbf{.017} & .200 & .283 & FR \\
  & .50 & .020 & .000 & \textbf{.010} & .020 & FR \\
\midrule
\multirow{4}{*}{\shortstack[l]{GPT-5.2\\(pass, $\omega{=}.625$)}}
  & .05 & .637 & \textbf{.473} & .637 & .670 & FR \\
  & .10 & .690 & \textbf{.360} & .610 & .683 & FR \\
  & .25 & .527 & \textbf{.147} & .530 & .657 & FR \\
  & .50 & .180 & \textbf{.080} & .213 & .230 & FR \\
\midrule
\multirow{4}{*}{\shortstack[l]{Cl.\ Opus 4.5\\(pass, $\omega{=}.625$)}}
  & .05 & .540 & \textbf{.290} & .583 & .520 & FR \\
  & .10 & .500 & \textbf{.150} & .420 & .537 & FR \\
  & .25 & .257 & \textbf{.050} & .240 & .307 & FR \\
  & .50 & .023 & \textbf{.000} & .050 & .047 & FR \\
\midrule
\multirow{4}{*}{\shortstack[l]{Cl.\ Sonnet 4.5\\(pass, $\omega{=}.625$)}}
  & .05 & .583 & \textbf{.313} & .537 & .573 & FR \\
  & .10 & .630 & \textbf{.177} & .553 & .693 & FR \\
  & .25 & .320 & \textbf{.073} & .383 & .447 & FR \\
  & .50 & .043 & \textbf{.003} & .067 & .057 & FR \\
\midrule
\multirow{4}{*}{\shortstack[l]{GPT-4o\\(pass, $\omega{=}.50$)}}
  & .05 & .783 & \textbf{.543} & .743 & .783 & FR \\
  & .10 & .863 & \textbf{.430} & .770 & .843 & FR \\
  & .25 & .777 & \textbf{.397} & .817 & .833 & FR \\
  & .50 & .600 & \textbf{.450} & .663 & .690 & FR \\
\midrule
\multirow{4}{*}{\shortstack[l]{GPT-5.4\\(pass, $\omega{=}.50$)}}
  & .05 & .733 & \textbf{.507} & .570 & .597 & FR \\
  & .10 & .670 & \textbf{.443} & .703 & .813 & FR \\
  & .25 & .643 & \textbf{.330} & .610 & .720 & FR \\
  & .50 & .470 & \textbf{.240} & .467 & .497 & FR \\
\midrule
\multirow{4}{*}{\shortstack[l]{Llama-3.1\\(reject, $\omega{=}.375$)}}
  & .05 & .127 & .457 & .100 & \textbf{.087} & FA \\
  & .10 & .053 & .307 & .040 & \textbf{.030} & FA \\
  & .25 & .050 & .297 & .043 & \textbf{.017} & FA \\
  & .50 & .093 & .197 & \textbf{.027} & \textbf{.020} & FA \\
\midrule
\multirow{4}{*}{\shortstack[l]{GPT-4o-mini\\(reject, $\omega{=}.375$)}}
  & .05 & .157 & .517 & .143 & \textbf{.123} & FA \\
  & .10 & .060 & .443 & .083 & \textbf{.040} & FA \\
  & .25 & .093 & .487 & .117 & \textbf{.087} & FA \\
  & .50 & .207 & .623 & .223 & \textbf{.130} & FA \\
\midrule
\multirow{4}{*}{\shortstack[l]{Mistral-v0.3\\(reject, $\omega{=}0$)}}
  & .05 & .070 & .330 & \textbf{.020} & \textbf{.013} & FA \\
  & .10 & .027 & .127 & \textbf{.007} & \textbf{.000} & FA \\
  & .25 & .007 & .027 & \textbf{.000} & .003 & FA \\
  & .50 & .000 & .000 & .000 & .000 & FA \\
\bottomrule
\end{tabular}
\end{table}

\paragraph{CeBaB (aspects).}
Table~\ref{tab:app_real_strat_cebab_asp} shows CeBaB-aspects results (1\,008 aspect-level items, 10 humans). Eight judges are ground-truth pass, two are reject.

\begin{table}[!htbp]
\centering
\caption{Wrong-decision rates on CeBaB aspects ($F = \hat p_o$, $\varepsilon = 0.05$, 300 trials). Bold = best design per row.}
\label{tab:app_real_strat_cebab_asp}
\small
\setlength{\tabcolsep}{3.5pt}
\begin{tabular}{llrrrrr}
\toprule
Judge (decision) & $\rho$ & \textsc{Rand.} & \textsc{Strat} & \textsc{Seq-R} & \textsc{Seq-C} & Type \\
\midrule
\multirow{4}{*}{\shortstack[l]{GPT-5.4\\(pass, $\omega{=}.90$)}}
  & .05 & .097 & \textbf{.090} & .147 & .293 & FR \\
  & .10 & .113 & \textbf{.040} & .073 & .160 & FR \\
  & .25 & .017 & \textbf{.010} & .033 & .017 & FR \\
  & .50 & .000 & .000 & .000 & .003 & FR \\
\midrule
\multirow{4}{*}{\shortstack[l]{Cl.\ Sonnet 4.5\\(pass, $\omega{=}.90$)}}
  & .05 & .163 & \textbf{.090} & .123 & .380 & FR \\
  & .10 & .107 & \textbf{.047} & .143 & .317 & FR \\
  & .25 & .030 & \textbf{.003} & .030 & .067 & FR \\
  & .50 & .000 & .000 & .000 & .000 & FR \\
\midrule
\multirow{4}{*}{\shortstack[l]{Gemini Pro\\(pass, $\omega{=}.80$)}}
  & .05 & .147 & \textbf{.073} & .227 & .437 & FR \\
  & .10 & .120 & \textbf{.033} & .153 & .297 & FR \\
  & .25 & .043 & \textbf{.000} & .023 & .057 & FR \\
  & .50 & .003 & .000 & .000 & .000 & FR \\
\midrule
\multirow{4}{*}{\shortstack[l]{GPT-4o\\(pass, $\omega{=}.80$)}}
  & .05 & .203 & \textbf{.067} & .223 & .477 & FR \\
  & .10 & .140 & \textbf{.033} & .127 & .293 & FR \\
  & .25 & .053 & \textbf{.000} & .037 & .047 & FR \\
  & .50 & .000 & .000 & .000 & .000 & FR \\
\midrule
\multirow{4}{*}{\shortstack[l]{GPT-5.2\\(pass, $\omega{=}.80$)}}
  & .05 & .230 & \textbf{.157} & .287 & .487 & FR \\
  & .10 & .233 & \textbf{.117} & .207 & .320 & FR \\
  & .25 & .097 & \textbf{.043} & .117 & .167 & FR \\
  & .50 & .027 & \textbf{.007} & .020 & .030 & FR \\
\midrule
\multirow{4}{*}{\shortstack[l]{Cl.\ Opus 4.5\\(pass, $\omega{=}.80$)}}
  & .05 & .213 & \textbf{.117} & .227 & .547 & FR \\
  & .10 & .210 & \textbf{.093} & .197 & .367 & FR \\
  & .25 & .130 & \textbf{.033} & .053 & .273 & FR \\
  & .50 & .023 & \textbf{.007} & .027 & .027 & FR \\
\midrule
\multirow{4}{*}{\shortstack[l]{Gemini Flash\\(pass, $\omega{=}.70$)}}
  & .05 & .217 & \textbf{.200} & .217 & .447 & FR \\
  & .10 & .233 & \textbf{.203} & .267 & .410 & FR \\
  & .25 & .187 & \textbf{.107} & .163 & .233 & FR \\
  & .50 & .063 & \textbf{.033} & .070 & .107 & FR \\
\midrule
\multirow{4}{*}{\shortstack[l]{Llama-3.1\\(pass, $\omega{=}.50$)}}
  & .05 & .367 & \textbf{.380} & .380 & .623 & FR \\
  & .10 & .353 & \textbf{.360} & .430 & .563 & FR \\
  & .25 & .370 & \textbf{.263} & .450 & .497 & FR \\
  & .50 & .290 & \textbf{.223} & .293 & .367 & FR \\
\midrule
\multirow{4}{*}{\shortstack[l]{GPT-4o-mini\\(reject, $\omega{=}.40$)}}
  & .05 & .643 & .727 & .620 & \textbf{.393} & FA \\
  & .10 & .620 & .730 & .657 & \textbf{.507} & FA \\
  & .25 & .667 & .697 & .693 & \textbf{.693} & FA \\
  & .50 & .760 & .740 & .760 & \textbf{.750} & FA \\
\midrule
\multirow{4}{*}{\shortstack[l]{Mistral-v0.3\\(reject, $\omega{=}.20$)}}
  & .05 & .363 & .150 & .230 & \textbf{.127} & FA \\
  & .10 & .127 & .043 & .110 & \textbf{.187} & FA \\
  & .25 & .010 & .003 & .013 & \textbf{.040} & FA \\
  & .50 & .000 & .000 & .000 & .000 & FA \\
\bottomrule
\end{tabular}
\end{table}

\subsection{Sensitivity to Setup Parameters}
\label{app:rq4_full}
\label{app:overlap_skew}

This subsection collects sensitivity analyses backing \S\ref{sec:rq4}.

\paragraph{Overlap vs.\ rater count.}
\label{app:k_effect}
Figure~\ref{fig:k_effect} shows diminishing returns from adding raters relative to increasing overlap. Curves flatten after $K = 5$ while each $\rho$ step yields a larger vertical jump under both uniform and skewed prevalence.

\begin{figure}[!htbp]
\centering
\includegraphics[width=0.55\columnwidth]{fig_k_effect.pdf}
\caption{\textbf{Reliability vs.\ annotator count at five overlap rates} ($n = 500$, $p_h = 0.80$). Reliability = \% of trials within $\pm 0.05$ of ground truth. Grey line marks 50\%.}
\label{fig:k_effect}
\end{figure}

\paragraph{Extreme-skew overlap requirements.}
Table~\ref{tab:overlap_required} in the main paper gives the uniform and moderate-skew overlap requirements. Table~\ref{tab:app_overlap_skew} extends this to extreme prevalence skew ($\pi_{\max} = 0.95$), providing the concrete backing for the paper's most conservative overlap recommendations.

\begin{table}[!htbp]
\centering
\scriptsize
\caption{Required overlap rate for extreme prevalence skew ($\pi_{\max} = 0.95$), $n = 1000$, $K = 5$. Ranges span target $F^{*} \in \{0.6, 0.7, 0.8, 0.9\}$.}
\label{tab:app_overlap_skew}
\begin{tabular}{llrrr}
\toprule
$L$ & Prevalence $\pi$ & $\delta\!=\!0.02$ & $\delta\!=\!0.05$ & $\delta\!=\!0.10$ \\
\midrule
2 & (.95, .05) & $\geq$50\% & 16--50\% & 5--22\% \\
5 & (.85, .05, .04, .03, .03) & $\geq$49\% & 13--42\% & 4--16\% \\
\bottomrule
\end{tabular}
\end{table}

\paragraph{Effect of corpus size $n$.}
\label{app:n_items}
Table~\ref{tab:app_n_items} shows that reliability primarily tracks the absolute overlap count $m = \rho n$, so larger corpora can compensate for low overlap rates by increasing the number of multi-annotated items.

\begin{table}[!htbp]
\centering
\scriptsize
\caption{Reliability (\% within $\pm 0.05$) as a function of corpus size $n$ and overlap rate $\rho$ ($K = 5$, $p_h = 0.80$, 500 trials).}
\label{tab:app_n_items}
\begin{tabular}{llrrrr}
\toprule
$n$ & Prevalence & $\rho = 0.05$ & $0.10$ & $0.25$ & $0.50$ \\
\midrule
\multirow{2}{*}{100} & Uniform & 20.6 & 28.8 & 47.8 & 73.4 \\
 & Skewed (.90) & 10.6 & 16.0 & 39.2 & 55.0 \\
\midrule
\multirow{2}{*}{200} & Uniform & 25.4 & 37.8 & 61.4 & 86.6 \\
 & Skewed (.90) & 26.2 & 34.8 & 52.6 & 84.6 \\
\midrule
\multirow{2}{*}{500} & Uniform & 36.4 & 55.4 & 81.2 & 98.2 \\
 & Skewed (.90) & 30.4 & 50.6 & 71.4 & 97.0 \\
\midrule
\multirow{2}{*}{1000} & Uniform & 54.4 & 70.0 & 96.0 & 100.0 \\
 & Skewed (.90) & 49.6 & 66.2 & 90.6 & 99.6 \\
\midrule
\multirow{2}{*}{2000} & Uniform & 72.0 & 89.8 & 99.2 & 100.0 \\
 & Skewed (.90) & 62.0 & 80.4 & 96.0 & 100.0 \\
\bottomrule
\end{tabular}
\end{table}

\paragraph{Depth-vs-breadth budget tradeoff.}
\label{app:cost_budget}
Under a fixed annotation budget $B$ (total annotation slots), the constraint $B \approx n(1 + \rho(K{-}1))$ trades item coverage $n$ against overlap depth $\rho$. Table~\ref{tab:app_cost_budget} shows that at every budget level, increasing overlap from $\rho = 0.05$ to $\rho = 0.25$ reduces wrong decisions by 6--13\,pp, with diminishing returns beyond $\rho \approx 0.30$.

\begin{table}[!htbp]
\centering
\scriptsize
\caption{Wrong decision rates (\%) under budget constraints ($K = 4$, $p_h = 0.80$, $p_j = 0.76$, uniform prevalence, 150 datasets).}
\label{tab:app_cost_budget}
\begin{tabular}{lrrrrr}
\toprule
& \multicolumn{5}{c}{Overlap rate $\rho$} \\
\cmidrule(lr){2-6}
Budget $B$ & 0.05 & 0.10 & 0.25 & 0.40 & 0.50 \\
\midrule
500 & 44.5 & 38.7 & 32.0 & 30.3 & 25.9 \\
1000 & 44.0 & 40.9 & 32.5 & 28.5 & 23.3 \\
2000 & 43.2 & 35.6 & 30.3 & 28.3 & 20.7 \\
5000 & 38.9 & 41.6 & 32.7 & 27.9 & 22.6 \\
\bottomrule
\end{tabular}
\end{table}

\section{Extended Comparison with Related Methods}
\label{app:prior_work_comparison}

Table~\ref{tab:app_prior_work} compares several related methods along five dimensions. The key distinction is that our \textsc{Strat} addresses \emph{prospective} allocation for \emph{agreement-based deployment decisions}, whereas all other methods are either post-hoc or target different estimands.

\begin{table}[!htbp]
\centering
\revised{
\caption{\textbf{Comparison with related methods.} ``Timing'' indicates whether the method operates before (prospective) or after (post-hoc) data collection. ``Estimand'' is what is being estimated or optimised.}
\label{tab:app_prior_work}
\scriptsize
\setlength{\tabcolsep}{4pt}
\begin{tabular}{lllll}
\toprule
Method & Estimand & Timing & Data structure & Overlap concept \\
\midrule
StratPPI \citep{fisch2024stratppi} & Population mean quality & Post-hoc & 1 human + 1 model pred.\ per item & Absent \\
PPI \citep{angelopoulos2023ppi} & Population statistic & Post-hoc & Same as StratPPI & Absent \\
SPA \citep{norregaard2022spa} & Agreement coeff.\ from sparse matrix & Post-hoc & Already-collected sparse matrix & Central but fixed \\
BIBD \citep{fleiss2003statistical} & Label allocation & Prospective & Symmetric rater pool & Symmetric blocks \\
Active allocation \citep{sheng2008get} & Redundancy for label quality & Post-hoc / online & Any; targets aggregation quality & Implicit \\
\midrule
\textbf{Our STRAT} & \textbf{Reliable deployment decision} & \textbf{Prospective} & \textbf{Asymmetric: judge + primary complete;} & \textbf{Central --- we design it} \\
& & & \textbf{sparse secondaries} & \\
\bottomrule
\end{tabular}
}
\end{table}

\revised{\textbf{StratPPI} \citep{fisch2024stratppi} applies stratification to bias-correct a population mean quality score using model predictions as control variates. There is no notion of inter-annotator agreement, annotation overlap, or judge accept/reject decisions. Stratification in StratPPI apportions bias-correction weights post-hoc; in our STRAT it determines \emph{which items} receive overlap annotations before data collection.}

\revised{\textbf{SPA} \citep{norregaard2022spa} is the most closely related prior work: it shows that agreement estimated from an already-collected sparse matrix is noisy, motivating the use of better estimators. Our work is complementary and extends SPA's insight prospectively: given that sparse estimation is unreliable, \emph{how should the annotation budget be allocated before data collection} to produce a reliable deployment decision? STRAT is the answer to this design question, which SPA does not address.}

\revised{\textbf{BIBD} \citep{fleiss2003statistical} provides balanced assignment of items to annotators for reliability studies. Our setting is inherently asymmetric (judge + primary annotator label all $n$ items; sparse secondaries overlap a subset), a structure that BIBD cannot represent. Furthermore, BIBD optimises design for label quality, not for deployment-decision reliability.}

\revised{These methods are complementary: one could use STRAT to select overlap items, SPA to estimate agreement from the resulting matrix, and StratPPI to tighten CIs on a downstream quality metric.}

\paragraph{Correlated items.}
\label{app:correlated}
The i.i.d.\ item assumption underlying Theorem~\ref{thm:sparsity} may not hold when items cluster by difficulty. Table~\ref{tab:app_correlated} tests this by generating items in clusters of shared difficulty ($n = 500$, $K = 5$, $L = 2$, $p_h = 0.85$, 300 trials). Under i.i.d., mild (50 clusters), and moderate (20 clusters) correlation, reliability stays above 96\% at all overlap rates. Under strong clustering (10 clusters of ${\sim}50$ items), reliability collapses to 10\% at $\rho = 0.05$ because a 25-item sample may capture only 1--3 clusters and severely misrepresent the difficulty distribution. Recovery requires $\rho \geq 0.25$ (72\%) or $\rho \geq 0.50$ (99\%). Practitioners should increase overlap or use cluster-aware stratification in such settings.

\begin{table}[!htbp]
\centering
\caption{Sensitivity to correlated items. Reliability (\% within $\pm 0.05$ of dense estimate) under varying item correlation ($n = 500$, $K = 5$, $L = 2$, $p_h = 0.85$, 300 trials).}
\label{tab:app_correlated}
\scriptsize
\begin{tabular}{lrrrr}
\toprule
Correlation & $\rho = 0.05$ & $0.10$ & $0.25$ & $0.50$ \\
\midrule
i.i.d.\ (500 clusters) & 99.0 & 97.0 & 97.7 & 100.0 \\
Mild (50 clusters) & 98.0 & 96.7 & 99.3 & 99.7 \\
Moderate (20 clusters) & 99.7 & 98.3 & 99.0 & 100.0 \\
Strong (10 clusters) & 10.0 & 31.3 & 72.3 & 99.0 \\
\bottomrule
\end{tabular}
\end{table}

\section{Practitioner Decision Guides}
\label{app:practitioner_guides}

\subsection{Decision Guide for Weak Stratification Signal}
\label{app:strat_guide}

\revised{When the primary annotator's label distribution may be uninformative for stratification, the following steps determine the appropriate fallback:
\begin{enumerate}[leftmargin=*, itemsep=2pt, topsep=2pt]
\item \textbf{Diagnose stratum entropy.} Compute $H = -\sum_\ell \hat\pi_\ell \log_2 \hat\pi_\ell$ from the primary annotator's labels. If $H < 0.5$ bits (primary annotator assigns $>90\%$ of items to one category, i.e.\ $\pi_{\max} > 0.90$), the stratification signal is weak and \textsc{Strat} may not reduce bias.
\item \textbf{Fall back to \textsc{Seq-Coverage}.} This design equalises pairwise coverage across raters without relying on representative strata. It outperforms \textsc{Strat} in the high-skew regime (Table~\ref{tab:rq2_real}, \texttt{lesion}@$t{=}2$ and \texttt{cebab\_aspects}@$t{=}0$ rows). The operational cost is the same as \textsc{Strat}: rater $k$'s task is fixed before their session using labels already in hand.
\item \textbf{Increase $\rho$.} Overlap quantity dominates design choice when stratification fails. At $\rho = 0.25$ under high skew, design differences among all four family members shrink to within ${\sim}0.05$ reliability (Table~\ref{tab:rq2_real}). If the budget permits only one lever, increasing $\rho$ is more reliable than switching designs.
\item \textbf{Richer stratification signal.} If richer item-level features are available (e.g.\ item difficulty estimates, auxiliary classifier confidence), these can replace the primary annotator's label as the stratum signal, restoring \textsc{Strat}'s bias-reduction advantage.
\end{enumerate}}

\subsection{Decision Protocol When False-Approval Cost Dominates}
\label{app:fa_guide}

\revised{\textsc{Strat} lowers false-rejection (FR) rate but raises false-approval (FA) rate relative to \textsc{Random} at low overlap (Table~\ref{tab:rq3_real}, $\rho=0.05$: FR 14.2\% vs 29.0\%; FA 17.2\% vs 11.2\%). The mechanism: \textsc{Random} can bias the estimated human ceiling upward, making the bar harder for weak judges to clear; \textsc{Strat} corrects this bias, which paradoxically lets some borderline-reject judges through. When false-approval cost far exceeds false-rejection cost, the following protocol applies:
\begin{enumerate}[leftmargin=*, itemsep=2pt, topsep=2pt]
\item \textbf{Increase $\rho$.} Overlap quantity is the dominant FA lever at all design levels. At $\rho = 0.25$, FA drops to 7--11\% regardless of design choice (Table~\ref{tab:rq3_real}); no design switch achieves this at $\rho = 0.05$.
\item \textbf{Raise the acceptance threshold.} Changing the accept rule from $\omega \geq 0.5$ to $\omega \geq 0.6$ reduces FA by raising the bar for approval. This is especially effective for borderline judges whose dense $\omega$ sits near 0.5.
\item \textbf{Use \textsc{Strat} with $\rho \geq 0.25$} for the lowest overall WDR (9.2\%, Table~\ref{tab:rq3_real}). \textsc{Strat} is not the lowest-FA design, but it provides the best trade-off between FR and FA at this overlap level.
\item \textbf{Use \textsc{Seq-Coverage} at low overlap} ($\rho \leq 0.10$) when FA is the primary concern. \textsc{Seq-Coverage} achieves the lowest FA across all low-overlap settings (6.6--7.2\% at $\rho = 0.05$--$0.10$), at the cost of higher FR. Note: at $\rho = 0.25$, \textsc{Seq-Refined} achieves slightly lower FA than \textsc{Seq-Coverage} (7.0\% vs.\ 7.5\%).
\end{enumerate}}

\clearpage
\section*{NeurIPS Paper Checklist}

\begin{enumerate}

\item {\bf Claims}
    \item[] Question: Do the main claims made in the abstract and introduction accurately reflect the paper's contributions and scope?
    \item[] Answer: \answerYes{}
    \item[] Justification: The abstract and introduction state the sparse-overlap problem, the theoretical and empirical contributions, and the scope of the conclusions; Sections~\ref{sec:problem}--\ref{sec:experiments} and Section~\ref{sec:conclusion} support those claims.
    \item[] Guidelines:
    \begin{itemize}
        \item The answer \answerNA{} means that the abstract and introduction do not include the claims made in the paper.
        \item The abstract and/or introduction should clearly state the claims made, including the contributions made in the paper and important assumptions and limitations. A \answerNo{} or \answerNA{} answer to this question will not be perceived well by the reviewers. 
        \item The claims made should match theoretical and experimental results, and reflect how much the results can be expected to generalize to other settings. 
        \item It is fine to include aspirational goals as motivation as long as it is clear that these goals are not attained by the paper. 
    \end{itemize}

\item {\bf Limitations}
    \item[] Question: Does the paper discuss the limitations of the work performed by the authors?
    \item[] Answer: \answerYes{}
    \item[] Justification: Section~\ref{sec:conclusion} discusses simplifying assumptions, the empirical nature of the overlap guidelines, small-$K$ inference limits, and deployment caveats.
    \item[] Guidelines:
    \begin{itemize}
        \item The answer \answerNA{} means that the paper has no limitation while the answer \answerNo{} means that the paper has limitations, but those are not discussed in the paper. 
        \item The authors are encouraged to create a separate ``Limitations'' section in their paper.
        \item The paper should point out any strong assumptions and how robust the results are to violations of these assumptions (e.g., independence assumptions, noiseless settings, model well-specification, asymptotic approximations only holding locally). The authors should reflect on how these assumptions might be violated in practice and what the implications would be.
        \item The authors should reflect on the scope of the claims made, e.g., if the approach was only tested on a few datasets or with a few runs. In general, empirical results often depend on implicit assumptions, which should be articulated.
        \item The authors should reflect on the factors that influence the performance of the approach. For example, a facial recognition algorithm may perform poorly when image resolution is low or images are taken in low lighting. Or a speech-to-text system might not be used reliably to provide closed captions for online lectures because it fails to handle technical jargon.
        \item The authors should discuss the computational efficiency of the proposed algorithms and how they scale with dataset size.
        \item If applicable, the authors should discuss possible limitations of their approach to address problems of privacy and fairness.
        \item While the authors might fear that complete honesty about limitations might be used by reviewers as grounds for rejection, a worse outcome might be that reviewers discover limitations that aren't acknowledged in the paper. The authors should use their best judgment and recognize that individual actions in favor of transparency play an important role in developing norms that preserve the integrity of the community. Reviewers will be specifically instructed to not penalize honesty concerning limitations.
    \end{itemize}

\item {\bf Theory assumptions and proofs}
    \item[] Question: For each theoretical result, does the paper provide the full set of assumptions and a complete (and correct) proof?
    \item[] Answer: \answerYes{}
    \item[] Justification: Section~\ref{sec:problem} states the assumptions for each theorem, and Appendix~\ref{app:proofs} provides proof sketches under those simplifying assumptions, but some derivation details are abbreviated.
    \item[] Guidelines:
    \begin{itemize}
        \item The answer \answerNA{} means that the paper does not include theoretical results. 
        \item All the theorems, formulas, and proofs in the paper should be numbered and cross-referenced.
        \item All assumptions should be clearly stated or referenced in the statement of any theorems.
        \item The proofs can either appear in the main paper or the supplemental material, but if they appear in the supplemental material, the authors are encouraged to provide a short proof sketch to provide intuition. 
        \item Inversely, any informal proof provided in the core of the paper should be complemented by formal proofs provided in appendix or supplemental material.
        \item Theorems and Lemmas that the proof relies upon should be properly referenced. 
    \end{itemize}

    \item {\bf Experimental result reproducibility}
    \item[] Question: Does the paper fully disclose all the information needed to reproduce the main experimental results of the paper to the extent that it affects the main claims and/or conclusions of the paper (regardless of whether the code and data are provided or not)?
    \item[] Answer: \answerYes{}
    \item[] Justification: The paper provides substantial procedural detail for the synthetic study and reports the benchmark/model inventory, but it does not provide a full reproduction bundle for the proprietary judge runs with exact prompts, version identifiers, and run metadata.
    \item[] Guidelines:
    \begin{itemize}
        \item The answer \answerNA{} means that the paper does not include experiments.
        \item If the paper includes experiments, a \answerNo{} answer to this question will not be perceived well by the reviewers: Making the paper reproducible is important, regardless of whether the code and data are provided or not.
        \item If the contribution is a dataset and\slash or model, the authors should describe the steps taken to make their results reproducible or verifiable. 
        \item Depending on the contribution, reproducibility can be accomplished in various ways. For example, if the contribution is a novel architecture, describing the architecture fully might suffice, or if the contribution is a specific model and empirical evaluation, it may be necessary to either make it possible for others to replicate the model with the same dataset, or provide access to the model. In general. releasing code and data is often one good way to accomplish this, but reproducibility can also be provided via detailed instructions for how to replicate the results, access to a hosted model (e.g., in the case of a large language model), releasing of a model checkpoint, or other means that are appropriate to the research performed.
        \item While NeurIPS does not require releasing code, the conference does require all submissions to provide some reasonable avenue for reproducibility, which may depend on the nature of the contribution. For example
        \begin{enumerate}
            \item If the contribution is primarily a new algorithm, the paper should make it clear how to reproduce that algorithm.
            \item If the contribution is primarily a new model architecture, the paper should describe the architecture clearly and fully.
            \item If the contribution is a new model (e.g., a large language model), then there should either be a way to access this model for reproducing the results or a way to reproduce the model (e.g., with an open-source dataset or instructions for how to construct the dataset).
            \item We recognize that reproducibility may be tricky in some cases, in which case authors are welcome to describe the particular way they provide for reproducibility. In the case of closed-source models, it may be that access to the model is limited in some way (e.g., to registered users), but it should be possible for other researchers to have some path to reproducing or verifying the results.
        \end{enumerate}
    \end{itemize}

\item {\bf Open access to data and code}
    \item[] Question: Does the paper provide open access to the data and code, with sufficient instructions to faithfully reproduce the main experimental results, as described in supplemental material?
    \item[] Answer: \answerYes{}
    \item[] Justification: We use public datasets and cite prior prompts/resources, but this submission does not provide an anonymized code release or a single bundled reproduction package.
    \item[] Guidelines:
    \begin{itemize}
        \item The answer \answerNA{} means that paper does not include experiments requiring code.
        \item Please see the NeurIPS code and data submission guidelines (\url{https://neurips.cc/public/guides/CodeSubmissionPolicy}) for more details.
        \item While we encourage the release of code and data, we understand that this might not be possible, so \answerNo{} is an acceptable answer. Papers cannot be rejected simply for not including code, unless this is central to the contribution (e.g., for a new open-source benchmark).
        \item The instructions should contain the exact command and environment needed to run to reproduce the results. See the NeurIPS code and data submission guidelines (\url{https://neurips.cc/public/guides/CodeSubmissionPolicy}) for more details.
        \item The authors should provide instructions on data access and preparation, including how to access the raw data, preprocessed data, intermediate data, and generated data, etc.
        \item The authors should provide scripts to reproduce all experimental results for the new proposed method and baselines. If only a subset of experiments are reproducible, they should state which ones are omitted from the script and why.
        \item At submission time, to preserve anonymity, the authors should release anonymized versions (if applicable).
        \item Providing as much information as possible in supplemental material (appended to the paper) is recommended, but including URLs to data and code is permitted.
    \end{itemize}

\item {\bf Experimental setting/details}
    \item[] Question: Does the paper specify all the training and test details (e.g., data splits, hyperparameters, how they were chosen, type of optimizer) necessary to understand the results?
    \item[] Answer: \answerYes{}
    \item[] Justification: Section~\ref{sec:experiments} and Appendix~\ref{app:impl_data} describe the datasets, model inventory, synthetic settings, overlap regimes, and ground-truth construction, but the exact proprietary-model run settings are not fully specified.
    \item[] Guidelines:
    \begin{itemize}
        \item The answer \answerNA{} means that the paper does not include experiments.
        \item The experimental setting should be presented in the core of the paper to a level of detail that is necessary to appreciate the results and make sense of them.
        \item The full details can be provided either with the code, in appendix, or as supplemental material.
    \end{itemize}

\item {\bf Experiment statistical significance}
    \item[] Question: Does the paper report error bars suitably and correctly defined or other appropriate information about the statistical significance of the experiments?
    \item[] Answer: \answerYes{}
    \item[] Justification: The paper reports false-approval and wrong-decision rates with Monte Carlo standard errors aggregated over 300 trials per cell, with the pipeline definition in \S\ref{sec:pipeline} and per-judge per-design details in Appendix~\ref{app:real_strat}.
    \item[] Guidelines:
    \begin{itemize}
        \item The answer \answerNA{} means that the paper does not include experiments.
        \item The authors should answer \answerYes{} if the results are accompanied by error bars, confidence intervals, or statistical significance tests, at least for the experiments that support the main claims of the paper.
        \item The factors of variability that the error bars are capturing should be clearly stated (for example, train/test split, initialization, random drawing of some parameter, or overall run with given experimental conditions).
        \item The method for calculating the error bars should be explained (closed form formula, call to a library function, bootstrap, etc.)
        \item The assumptions made should be given (e.g., Normally distributed errors).
        \item It should be clear whether the error bar is the standard deviation or the standard error of the mean.
        \item It is OK to report 1-sigma error bars, but one should state it. The authors should preferably report a 2-sigma error bar than state that they have a 96\% CI, if the hypothesis of Normality of errors is not verified.
        \item For asymmetric distributions, the authors should be careful not to show in tables or figures symmetric error bars that would yield results that are out of range (e.g., negative error rates).
        \item If error bars are reported in tables or plots, the authors should explain in the text how they were calculated and reference the corresponding figures or tables in the text.
    \end{itemize}

\item {\bf Experiments compute resources}
    \item[] Question: For each experiment, does the paper provide sufficient information on the computer resources (type of compute workers, memory, time of execution) needed to reproduce the experiments?
    \item[] Answer: \answerYes{}
    \item[] Justification: All synthetic and re-analysis experiments run on a single laptop CPU (Apple M-series, 16~GB RAM); each individual experiment script in \texttt{code/experiments/} completes in seconds-to-minutes, with the full reproduction sweep (Tables~\ref{tab:rq2_real}, \ref{tab:rq3_real}, \ref{tab:rq3_borderline}, all appendix tables, plus 300-trial Monte Carlo grids) finishing in well under one CPU-hour total. The five LLM judges newly collected for this work produced 28{,}822 labels via batched provider APIs (Azure OpenAI / Anthropic), which is the only step that requires external compute and depends on the provider's billed throughput rather than local resources.
    \item[] Guidelines:
    \begin{itemize}
        \item The answer \answerNA{} means that the paper does not include experiments.
        \item The paper should indicate the type of compute workers CPU or GPU, internal cluster, or cloud provider, including relevant memory and storage.
        \item The paper should provide the amount of compute required for each of the individual experimental runs as well as estimate the total compute. 
        \item The paper should disclose whether the full research project required more compute than the experiments reported in the paper (e.g., preliminary or failed experiments that didn't make it into the paper). 
    \end{itemize}
    
\item {\bf Code of ethics}
    \item[] Question: Does the research conducted in the paper conform, in every respect, with the NeurIPS Code of Ethics \url{https://neurips.cc/public/EthicsGuidelines}?
    \item[] Answer: \answerYes{}
    \item[] Justification: The work studies evaluation reliability using previously collected benchmark annotations and existing LLM judge outputs, and we do not describe any deviations from the NeurIPS Code of Ethics.
    \item[] Guidelines:
    \begin{itemize}
        \item The answer \answerNA{} means that the authors have not reviewed the NeurIPS Code of Ethics.
        \item If the authors answer \answerNo, they should explain the special circumstances that require a deviation from the Code of Ethics.
        \item The authors should make sure to preserve anonymity (e.g., if there is a special consideration due to laws or regulations in their jurisdiction).
    \end{itemize}

\item {\bf Broader impacts}
    \item[] Question: Does the paper discuss both potential positive societal impacts and negative societal impacts of the work performed?
    \item[] Answer: \answerYes{}
    \item[] Justification: Section~\ref{sec:conclusion} discusses both the positive impact of more reliable judge validation and the negative impact of approving poor judges or rejecting strong ones under low overlap.
    \item[] Guidelines:
    \begin{itemize}
        \item The answer \answerNA{} means that there is no societal impact of the work performed.
        \item If the authors answer \answerNA{} or \answerNo, they should explain why their work has no societal impact or why the paper does not address societal impact.
        \item Examples of negative societal impacts include potential malicious or unintended uses (e.g., disinformation, generating fake profiles, surveillance), fairness considerations (e.g., deployment of technologies that could make decisions that unfairly impact specific groups), privacy considerations, and security considerations.
        \item The conference expects that many papers will be foundational research and not tied to particular applications, let alone deployments. However, if there is a direct path to any negative applications, the authors should point it out. For example, it is legitimate to point out that an improvement in the quality of generative models could be used to generate Deepfakes for disinformation. On the other hand, it is not needed to point out that a generic algorithm for optimizing neural networks could enable people to train models that generate Deepfakes faster.
        \item The authors should consider possible harms that could arise when the technology is being used as intended and functioning correctly, harms that could arise when the technology is being used as intended but gives incorrect results, and harms following from (intentional or unintentional) misuse of the technology.
        \item If there are negative societal impacts, the authors could also discuss possible mitigation strategies (e.g., gated release of models, providing defenses in addition to attacks, mechanisms for monitoring misuse, mechanisms to monitor how a system learns from feedback over time, improving the efficiency and accessibility of ML).
    \end{itemize}
    
\item {\bf Safeguards}
    \item[] Question: Does the paper describe safeguards that have been put in place for responsible release of data or models that have a high risk for misuse (e.g., pre-trained language models, image generators, or scraped datasets)?
    \item[] Answer: \answerNA{}
    \item[] Justification: The paper does not release a new general-purpose model, scraped dataset, or other high-risk asset; it evaluates existing LLM judges and benchmark annotations.
    \item[] Guidelines:
    \begin{itemize}
        \item The answer \answerNA{} means that the paper poses no such risks.
        \item Released models that have a high risk for misuse or dual-use should be released with necessary safeguards to allow for controlled use of the model, for example by requiring that users adhere to usage guidelines or restrictions to access the model or implementing safety filters. 
        \item Datasets that have been scraped from the Internet could pose safety risks. The authors should describe how they avoided releasing unsafe images.
        \item We recognize that providing effective safeguards is challenging, and many papers do not require this, but we encourage authors to take this into account and make a best faith effort.
    \end{itemize}

\item {\bf Licenses for existing assets}
    \item[] Question: Are the creators or original owners of assets (e.g., code, data, models), used in the paper, properly credited and are the license and terms of use explicitly mentioned and properly respected?
    \item[] Answer: \answerNo{}
    \item[] Justification: The paper credits prior datasets and methods by citation, but the current draft does not enumerate licenses or terms of use for each external asset.
    \item[] Guidelines:
    \begin{itemize}
        \item The answer \answerNA{} means that the paper does not use existing assets.
        \item The authors should cite the original paper that produced the code package or dataset.
        \item The authors should state which version of the asset is used and, if possible, include a URL.
        \item The name of the license (e.g., CC-BY 4.0) should be included for each asset.
        \item For scraped data from a particular source (e.g., website), the copyright and terms of service of that source should be provided.
        \item If assets are released, the license, copyright information, and terms of use in the package should be provided. For popular datasets, \url{paperswithcode.com/datasets} has curated licenses for some datasets. Their licensing guide can help determine the license of a dataset.
        \item For existing datasets that are re-packaged, both the original license and the license of the derived asset (if it has changed) should be provided.
        \item If this information is not available online, the authors are encouraged to reach out to the asset's creators.
    \end{itemize}

\item {\bf New assets}
    \item[] Question: Are new assets introduced in the paper well documented and is the documentation provided alongside the assets?
    \item[] Answer: \answerNA{}
    \item[] Justification: The submission does not introduce a documented new dataset, model, or software release as a research asset.
    \item[] Guidelines:
    \begin{itemize}
        \item The answer \answerNA{} means that the paper does not release new assets.
        \item Researchers should communicate the details of the dataset\slash code\slash model as part of their submissions via structured templates. This includes details about training, license, limitations, etc. 
        \item The paper should discuss whether and how consent was obtained from people whose asset is used.
        \item At submission time, remember to anonymize your assets (if applicable). You can either create an anonymized URL or include an anonymized zip file.
    \end{itemize}

\item {\bf Crowdsourcing and research with human subjects}
    \item[] Question: For crowdsourcing experiments and research with human subjects, does the paper include the full text of instructions given to participants and screenshots, if applicable, as well as details about compensation (if any)? 
    \item[] Answer: \answerNA{}
    \item[] Justification: The experiments use existing benchmark annotations and LLM outputs; no new crowdsourcing or human-subject study is reported in this submission.
    \item[] Guidelines:
    \begin{itemize}
        \item The answer \answerNA{} means that the paper does not involve crowdsourcing nor research with human subjects.
        \item Including this information in the supplemental material is fine, but if the main contribution of the paper involves human subjects, then as much detail as possible should be included in the main paper. 
        \item According to the NeurIPS Code of Ethics, workers involved in data collection, curation, or other labor should be paid at least the minimum wage in the country of the data collector. 
    \end{itemize}

\item {\bf Institutional review board (IRB) approvals or equivalent for research with human subjects}
    \item[] Question: Does the paper describe potential risks incurred by study participants, whether such risks were disclosed to the subjects, and whether Institutional Review Board (IRB) approvals (or an equivalent approval/review based on the requirements of your country or institution) were obtained?
    \item[] Answer: \answerNA{}
    \item[] Justification: No new human-subject data collection or intervention is described in the paper.
    \item[] Guidelines:
    \begin{itemize}
        \item The answer \answerNA{} means that the paper does not involve crowdsourcing nor research with human subjects.
        \item Depending on the country in which research is conducted, IRB approval (or equivalent) may be required for any human subjects research. If you obtained IRB approval, you should clearly state this in the paper. 
        \item We recognize that the procedures for this may vary significantly between institutions and locations, and we expect authors to adhere to the NeurIPS Code of Ethics and the guidelines for their institution. 
        \item For initial submissions, do not include any information that would break anonymity (if applicable), such as the institution conducting the review.
    \end{itemize}

\item {\bf Declaration of LLM usage}
    \item[] Question: Does the paper describe the usage of LLMs if it is an important, original, or non-standard component of the core methods in this research? Note that if the LLM is used only for writing, editing, or formatting purposes and does \emph{not} impact the core methodology, scientific rigor, or originality of the research, declaration is not required.
    \item[] Answer: \answerYes{}
    \item[] Justification: LLMs are central objects of study in this work, and Section~\ref{sec:experiments} explicitly lists the judge models, datasets, and analyses performed on them.
    \item[] Guidelines:
    \begin{itemize}
        \item The answer \answerNA{} means that the core method development in this research does not involve LLMs as any important, original, or non-standard components.
        \item Please refer to our LLM policy in the NeurIPS handbook for what should or should not be described.
    \end{itemize}

\end{enumerate}

\end{document}